\documentclass{article} 
\usepackage{collas2026_conference,times}

\usepackage{graphicx}
\usepackage{subcaption} 

\usepackage{amsmath, amssymb, amsthm, bm}
\usepackage{subcaption} 
\usepackage{algorithm} 
\usepackage{algorithm}
\usepackage{algpseudocode}
\usepackage{booktabs}
\usepackage{subcaption}

\usepackage[hidelinks]{hyperref}

\usepackage{caption}

\usepackage{algorithm}
\usepackage{algpseudocode}
\usepackage{amsthm}

\usepackage{easyReview}

\newtheorem{theorem}{Theorem}[section]
\newtheorem{lemma}[theorem]{Lemma}
\newtheorem{proposition}[theorem]{Proposition}

\usepackage{amsmath,amsfonts,bm}

\def\eqref#1{equation~\ref{#1}}

\def\1{\bm{1}}

\DeclareMathAlphabet{\mathsfit}{\encodingdefault}{\sfdefault}{m}{sl}
\SetMathAlphabet{\mathsfit}{bold}{\encodingdefault}{\sfdefault}{bx}{n}

\usepackage{hyperref}
\hypersetup{
    colorlinks=true,
    linkcolor=red,
    filecolor=magenta,
    urlcolor=blue,
    citecolor=purple,
    pdftitle={Overleaf Example},
    pdfpagemode=FullScreen,
    }

\title{When Geometry Aligns: Dihedral Hidden-State Transformations in UNet, ViT, and DiT Architectures.}

\author{Mojtaba Faramarzi\\
University of Montreal, Canada\\
Mila - Quebec AI Institute\\
\And 
Alex Lamb $^{*}$ \\
Tsinghua University \\
China \\
\And 
Irina Rish \thanks{Equal Supervision.} \\
University of Montreal, Canada\\
Mila - Quebec AI Institute\\
}

\collasfinalcopy 

\begin{document}

\maketitle

\begin{abstract}
Diffusion architectures now encompass convolutional UNets as well as transformer-based designs such as Diffusion Transformers (DiTs), inspired by Vision Transformers (ViTs), yet the effects of structured geometric perturbations within these architectures remain poorly understood. We study this question through a unified framework that applies reflection-based elements of the dihedral group to intermediate hidden states as controlled internal interventions, contrasting geometrically consistent and inconsistent variants. Using activation-level diagnostics, including Self-Consistency Shift (SCS), Activation Mass Scatter (AMS), and Drift, we analyze feature stability and geometric drift. We find that consistent transformations improve stability, while inconsistent ones induce predictable, architecture-specific failures. In the main Stable Diffusion 2.1 U-Net study, we evaluate seven intervention modes over three seeds and complement the internal diagnostics with image-level FID, KID, CLIP score, and LPIPS diversity. Taken together with supporting ViT and controlled DiT analyses, these results establish geometric consistency as a key principle for stable hidden-state interventions in spatially structured vision and diffusion models.
\end{abstract}

\section{Introduction}

Modern machine learning systems are increasingly modified during or after
training through fine-tuning, parameter-efficient adaptation, internal editing,
feature-level perturbations, and other forms of hidden-state intervention
\citep{vaswani2017attention,hu2021lora,meng2022rome}. In vision and generative
modeling, such interventions are applied to architectures whose intermediate
representations carry strong spatial structure, including convolutional U-Nets
and transformer-based models such as ViTs and DiTs
\citep{dosovitskiy2020vit,touvron2021deit,peebles2023dit,ho2020ddpm,song2020ddim,song2021score,rombach2022ldm}.
This raises a basic but underexplored question: what makes an internal
geometric transformation compatible with the computation of a spatially
structured model?

Prior work has studied input and latent-space augmentation and robustness
\citep{shorten2019survey,zhang2018mixup,yun2019cutmix,faramarzi2021patchup},
architectural equivariance and symmetry-aware modeling
\citep{cohen2016gcnn,weiler2018steerable,finzi2020lieconv,hoogeboom2022equivariant},
and hidden-state manipulation for interpretation, control, and editing
\citep{geva2021transformerediting,dai2021knowledgeNeurons,hertz2022prompt,meng2022rome,chefer2021transformer,bau2017network}.
However, modern diffusion and vision backbones have not been systematically
studied from the perspective of \emph{hidden-state geometric interventions}:
structured transformations applied directly to intermediate representations
inside otherwise standard architectures. This setting is especially informative
because it reveals whether a model can accommodate internal geometric change
while preserving the spatial organization of its features.

In this work, we study dihedral hidden-state transformations as controlled
internal interventions in U-Net-, ViT-, and DiT-style models. Our
contributions are fourfold. First, we introduce a unified framework for
applying hidden-state geometric transformations across convolutional and
transformer-based architectures. Second, we develop a theoretical framework
that formalizes hidden-state geometric interventions and explains geometric
consistency as the condition under which such interventions preserve coherent
computation. Third, we introduce activation-level diagnostics that we term
Self-Consistency Shift (SCS), Activation Mass Scatter (AMS), and Drift,
together with a fidelity-oriented proxy metric. Fourth, empirically, we
conduct a main quantitative study on the Stable Diffusion 2.1 U-Net,
complemented by supporting ViT and controlled DiT analyses, and show that
geometrically consistent interventions improve internal stability whereas
inconsistent interventions induce structured misalignment.

Using these diagnostics, our main U-Net study shows that geometrically
consistent interventions improve feature stability, whereas inconsistent
interventions lead to systematic geometric mismatch and reduced fidelity.


\section{Method: Geometric Hidden-State Interventions}
\label{sec:method}

We study \emph{hidden-state interventions} that apply a spatial transformation
directly to an intermediate representation inside a vision or diffusion
architecture. Let $z^{(\ell)}$ denote the hidden state at layer or block
$\ell$, and let $T$ denote a spatial transformation. Although the dihedral group $D_4$ contains both rotations and reflections, the
implemented transform set in this work is the reflection subset
$
\mathcal{T}
=
\{T_{\mathrm{hor}},T_{\mathrm{ver}},T_{\mathrm{diag}},T_{\mathrm{anti}}\}
\subset D_4,
$
namely horizontal, vertical, main-diagonal, and anti-diagonal flips. Accordingly, our empirical claims apply to this reflection-based instantiation;
extending the analysis to rotations is left for future work. This choice
isolates the effect of internal geometric change while keeping the underlying
architecture fixed. Our central question is whether such an intervention
preserves the spatial frame expected by the downstream computation.

Our organizing principle is \emph{geometric consistency}. An intervention is
\emph{consistent} if all interacting components that consume the transformed
representation operate in the same spatial frame. It is \emph{inconsistent} if
the transformation is applied to only part of a coupled computation, so that
different branches operate in incompatible coordinate systems. This principle
is the main lens through which we analyze both attention-based architectures
and U-Net-style feature hierarchies. In multi-head attention, the coupled
computation arises from the interaction of token routing within each head and
the subsequent mixing of heads through output projection
\citep{vaswani2017attention}. In U-Net backbones, it appears through
skip-connected encoder--decoder fusion \citep{ronneberger2015unet} and
attention blocks embedded inside diffusion architectures
\citep{ho2020ddpm,rombach2022ldm}. In both settings, hidden-state interventions
remain well-posed only when all coupled pathways stay geometrically aligned.

\subsection{Transformer Intervention: Flipped-Head Attention}

Consider a transformer block with input token matrix
$
X \in \mathbb{R}^{N \times d_{\mathrm{model}}},
$
where $N=P^2$ tokens are arranged on a $P\times P$ patch grid, as in Vision
Transformers \citep{dosovitskiy2020vit}. For head $h$, standard self-attention
is \citep{vaswani2017attention}
\begin{equation}
Q_h = XW_h^Q,\quad
K_h = XW_h^K,\quad
V_h = XW_h^V,\quad
O_h = \alpha_h V_h,\quad
\alpha_h = \mathrm{softmax}\!\left(\frac{Q_hK_h^\top}{\sqrt{d_k}}\right).
\label{eq:std_attn_main}
\end{equation}
The multi-head output is
\[
Y=\mathrm{Concat}(O_1,\dots,O_H)W^O.
\]

We intervene on a selected head $h^\star$ by reshaping its output into
\[
\mathcal{O}_{h^\star}\in\mathbb{R}^{P\times P\times d_v},
\qquad
\mathcal{O}_{h^\star}[i,j,:]=O_{h^\star}[t(i,j),:],
\]
where $t(i,j)=(i-1)P+j$, applying a spatial transformation $T$, and then
flattening back. For example, for a horizontal flip,
\begin{equation}
(T\mathcal{O}_{h^\star})[i,j,:]
=
\mathcal{O}_{h^\star}[i,P-j+1,:].
\label{eq:hflip_main}
\end{equation}
Let $\widetilde{O}_{h^\star}$ denote the flattened transformed output. The
intervened multi-head representation is then
\begin{equation}
\widehat{O}
=
\mathrm{Concat}(O_1,\dots,O_{h^\star-1},\widetilde{O}_{h^\star},
O_{h^\star+1},\dots,O_H),
\qquad
Y=\widehat{O}W^O.
\label{eq:flipped_head_main}
\end{equation}
A detailed derivation is provided in
Appendix~\ref{app:vit_flip_derivation}.

\paragraph{Attention-Consistent and Attention-Inconsistent Variants.}
Let
$
\mathcal{A}(Q,K,V)
=
\mathrm{softmax}\!\left(\frac{QK^\top}{\sqrt{d_k}}\right)V
$
denote the attention output of a single head, and let $\Pi_T$ be the
permutation matrix induced by the spatial transformation $T$ on token indices.
A geometrically consistent transformation of a single head satisfies
\begin{equation}
\mathcal{A}(\Pi_T Q,\Pi_T K,\Pi_T V)
=
\Pi_T\,\mathcal{A}(Q,K,V),
\label{eq:single_head_equiv_main}
\end{equation}
so token routing and aggregation remain in a common transformed frame.

In our setting, the relevant inconsistency arises not at the level of an
isolated head, but at the level of the full multi-head module. If only one
selected head $h^\star$ is transformed after attention, the module output
becomes
\begin{equation}
Y_{\mathrm{inc}}
=
\mathrm{Concat}(O_1,\dots,\Pi_T O_{h^\star},\dots,O_H)W^O,
\label{eq:attn_inconsistent_main}
\end{equation}
which mixes transformed and untransformed heads before output projection. In
contrast, a geometrically consistent transformation applies the same spatial
map to all interacting heads:
\begin{equation}
Y_{\mathrm{cons}}
=
\mathrm{Concat}(\Pi_T O_1,\dots,\Pi_T O_H)W^O
=
\Pi_T\,\mathrm{Concat}(O_1,\dots,O_H)W^O.
\label{eq:attn_consistent_main}
\end{equation}
The second equality holds because $\Pi_T$ acts on the token axis, whereas
$W^O$ acts only on channel features. Thus, the crucial distinction is not
whether a flip is applied, but whether the interacting heads remain expressed
in a common geometric frame. In iterative transformer denoisers such as DiT,
this mismatch can compound across denoising steps and manifest as increased
drift and reduced stability. Full proofs and the timestep-sensitivity analysis
are deferred to Appendix~\ref{app:qkv_appendix}.

\subsection{Random Hidden-State Geometric Augmentation}

The transformer construction above motivates a broader intervention scheme:
rather than applying a fixed transformation to a single head, we consider
stochastic hidden-state transformations applied at arbitrary intermediate
locations. Let
$
\mathcal{T}
=
\{T_{\mathrm{hor}},T_{\mathrm{ver}},T_{\mathrm{diag}},T_{\mathrm{anti}}\},
$
where the elements denote horizontal, vertical, main-diagonal, and
anti-diagonal reflections. For a selected block $\ell$ with hidden state
$H^{(\ell)}$, we sample an intervention pair $(\ell,\tau)$ with
$\tau\in\mathcal{T}$ and apply
\begin{equation}
\widetilde{H}^{(\ell)}=\tau(H^{(\ell)}).
\end{equation}
In our implementation, exactly one intervention location is selected per
mini-batch. This yields a lightweight hidden-state augmentation scheme that
exposes the model to multiple geometric views while remaining compatible with
pretrained backbones and requiring no architectural redesign. The same
consistency principle applies at this more general level: once a hidden state
is transformed, all downstream computations that interact through that state
must remain in a common spatial frame. We next make this explicit for diffusion
models.

\subsection{Hidden-State Intervention in Diffusion Models}

We now instantiate the same principle for diffusion denoisers, including both
U-Net-based architectures
\citep{ronneberger2015unet,ho2020ddpm,rombach2022ldm}
and transformer-based denoisers such as DiT \citep{peebles2023dit}. In both
cases, we apply a spatial transformation to an intermediate hidden state during
denoising and train the model to remain stable under this intervention.

Let $x_0 \sim p_{\mathrm{data}}$ be a clean training sample,
$\varepsilon \sim \mathcal{N}(0,I)$, and $x_t$ the noisy sample at diffusion
step $t$. Let $\varepsilon_\theta(x_t,t)$ denote the denoiser. For a chosen
layer or block $\ell$, let
$
F^{(\ell)}(x_t,t)\in\mathbb{R}^{C_\ell\times H_\ell\times W_\ell}
$
denote the corresponding hidden activation. Given a sampled transformation $\tau\in\mathcal{T}$, we define the transformed hidden state
\begin{equation}
\widetilde{F}^{(\ell)}(x_t,t)
=
\tau\!\left(F^{(\ell)}(x_t,t)\right).
\label{eq:diff_hidden_transform_main}
\end{equation}

Let $\varepsilon_\theta^{(\tau,\ell)}(x_t,t)$ denote the denoiser output obtained
by running the network up to block $\ell$, replacing
$F^{(\ell)}(x_t,t)$ by $\widetilde{F}^{(\ell)}(x_t,t)$, and propagating the
modified activation through the remainder of the network. The corresponding
training objective is
\begin{equation}
\mathcal{L}_{\mathrm{aug}}(\theta)
=
\mathbb{E}_{x_0,\varepsilon,t,\ell,\tau}
\Big[
\|\varepsilon-\varepsilon_\theta^{(\tau,\ell)}(x_t,t)\|_2^2
\Big].
\label{eq:diff_aug_loss_main}
\end{equation}

This objective is shared across both U-Net and DiT backbones; what differs is
how geometric consistency is realized by the underlying architecture.
In a U-Net, the critical interaction arises at skip-connected
encoder--decoder fusion. Consider a skip pair $(\ell,\ell')$ whose encoder and
decoder features have the same spatial resolution. Let
$
F_{\mathrm{enc}}^{(\ell)}\in\mathbb{R}^{C_{\mathrm{enc}}\times H\times W}
$
denote the encoder feature map, and let
$
F_{\mathrm{dec}}^{(\ell')}\in\mathbb{R}^{C_{\mathrm{dec}}\times H\times W}
$
denote the corresponding decoder feature map. Let
\(\phi\) denote the skip-fusion operator that combines these two tensors into a
single representation; for example, \(\phi\) may be concatenation followed by a
shared convolution, or any other fusion map used by the architecture. We write
the fused feature as
\begin{equation}
G^{(\ell,\ell')}
=
\phi\!\left(
F_{\mathrm{dec}}^{(\ell')},
F_{\mathrm{enc}}^{(\ell)}
\right).
\label{eq:skip_fusion_main}
\end{equation}

A geometrically consistent intervention applies the same transformation to both
coupled branches before fusion:
\begin{equation}
\widetilde{G}^{(\ell,\ell')}
=
\phi\!\left(
\tau(F_{\mathrm{dec}}^{(\ell')}),
\tau(F_{\mathrm{enc}}^{(\ell)})
\right).
\label{eq:skip_consistent_main}
\end{equation}
By contrast, transforming only one branch, for example
\[
\phi\!\left(
\tau(F_{\mathrm{dec}}^{(\ell')}),
F_{\mathrm{enc}}^{(\ell)}
\right)
\quad\text{or}\quad
\phi\!\left(
F_{\mathrm{dec}}^{(\ell')},
\tau(F_{\mathrm{enc}}^{(\ell)})
\right),
\]
creates a skip inconsistency by fusing features expressed in different spatial
frames.

In DiT-style transformer denoisers, the same objective in
Eq.~\ref{eq:diff_aug_loss_main} applies, but the coupled computation is
formed by attention and token mixing rather than skip fusion. As in the
transformer case above, geometric consistency requires all interacting
attention pathways to share the same transformed frame; partial
transformations instead create cross-head or cross-branch mismatch. Thus, the
same geometric-consistency principle governs both U-Net and DiT diffusion
models: hidden-state interventions remain stable only when all coupled
computations stay geometrically aligned. Additional implementation details are
deferred to Appendix~\ref{app:diffusion_details}.
\section{Theoretical Analysis: Stability and Regularization Under Geometric Consistency}
\label{sec:theory}
The method section defines how hidden-state transformations are applied, but
not why consistent interventions remain stable while inconsistent ones induce
mismatch. This section addresses that gap by characterizing flipped-head
representations, formalizing consistency in attention and U-Net fusion, and
giving an idealized regularization interpretation of symmetry-consistent
interventions.

\subsection{Representation Induced by a Flipped Head}

We begin with the simplest symmetric geometric intervention considered in the
paper: a horizontal flip applied to the output of a selected attention head.
The following result characterizes the spatial correspondence induced by this
operation.

\begin{lemma}[Flip-induced spatial correspondence]
\label{lem:flip_correspondence_main}
Let $h^\star$ be the selected head, and let
$\widetilde{O}_{h^\star}$ denote the flattened output obtained by horizontally
flipping the reshaped head output. Then, for every spatial position $(i,j)$,
\begin{equation}
\widetilde{O}_{h^\star}[t(i,j),:]
=
O_{h^\star}[t(i,P-j+1),:].
\label{eq:flip_correspondence_main}
\end{equation}
\end{lemma}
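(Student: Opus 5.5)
The plan is to unwind the three operations that define $\widetilde{O}_{h^\star}$: reshaping, flipping, and flattening. Then we compose the index maps. First, we make the flattening precise. Since $t(i,j)=(i-1)P+j$ is a bijection from $\{1,\dots,P\}^2$ onto $\{1,\dots,N\}$, flattening a tensor $\mathcal{Z}\in\mathbb{R}^{P\times P\times d_v}$ means forming the matrix $Z$ with $Z[t(i,j),:]=\mathcal{Z}[i,j,:]$; this is exactly the inverse of the reshaping used to define $\mathcal{O}_{h^\star}$. By definition, $\widetilde{O}_{h^\star}$ is the flattening of $T\mathcal{O}_{h^\star}$, so for every $(i,j)$ we have $\widetilde{O}_{h^\star}[t(i,j),:]=(T\mathcal{O}_{h^\star})[i,j,:]$.

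Second, we apply the horizontal flip formula (\ref{eq:hflip_main}). This gives $(T\mathcal{O}_{h^\star})[i,j,:]=\mathcal{O}_{h^\star}[i,P-j+1,:]$. Third, we apply the reshaping definition $\mathcal{O}_{h^\star}[i,j',:]=O_{h^\star}[t(i,j'),:]$ with $j'=P-j+1$. This is legitimate because $j\in\{1,\dots,P\}$ implies $j'\in\{1,\dots,P\}$. We obtain $O_{h^\star}[t(i,P-j+1),:]$, and chaining the three equalities yields (\ref{eq:flip_correspondence_main}).

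The only point needing care, and hence the mild obstacle, is the first step. We must confirm that ``flattening back'' uses the same index convention $t$ as the original reshape, so that flattening inverts reshaping, and that $t$ is indeed a bijection. Bijectivity follows from the division algorithm: $(i-1)P+j$ with $1\le j\le P$ determines $i$ and $j$ uniquely. As a by-product, the argument shows that $\widetilde{O}_{h^\star}=\Pi_T O_{h^\star}$, where $\Pi_T$ is the permutation matrix sending row $t(i,j)$ to row $t(i,P-j+1)$. This identification links the lemma to the form used in (\ref{eq:attn_inconsistent_main}).
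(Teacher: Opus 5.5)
Your proposal is correct and follows essentially the same route as the paper's proof: express $\widetilde{O}_{h^\star}[t(i,j),:]=(T\mathcal{O}_{h^\star})[i,j,:]$ via flattening, apply the horizontal-flip formula, and substitute the reshaping definition at column $P-j+1$. Your additional remarks go beyond the paper's proof but are sound: the bijectivity of $t$, and the identification $\widetilde{O}_{h^\star}=\Pi_T O_{h^\star}$. The direction of the row map in that identification does not matter, because the horizontal flip is an involution.
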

\noindent\emph{Proof.} See Appendix~\ref{app:proof_flip_correspondence}.

We next expand how this intervention appears in the final multi-head token
representation.

\begin{proposition}[Final token representation with a flipped head]
\label{prop:final_token_main}
Let $W^O=[W_1^O\;\cdots\;W_H^O]$ be partitioned into head-wise blocks, with
$W_h^O\in\mathbb{R}^{d_{\mathrm{model}}\times d_v}$, and let
$o_h[t]:=O_h[t,:]^\top\in\mathbb{R}^{d_v}$ denote the token feature of head
$h$ at token index $t$. Then, for each spatial location $(i,j)$,
\begin{equation}
\mathbf{y}_{t(i,j)}
=
\sum_{h\neq h^\star} W_h^O\,o_h[t(i,j)]
+
W_{h^\star}^O\,o_{h^\star}[t(i,P-j+1)].
\label{eq:final_token_theory_main}
\end{equation}
\end{proposition}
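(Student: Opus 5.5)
The plan is to compute the $t(i,j)$-th row of $Y=\widehat{O}W^O$ directly from the definition in Eq.~\ref{eq:flipped_head_main}, using the block structure of the concatenation, and then substitute the correspondence of Lemma~\ref{lem:flip_correspondence_main} into the block belonging to the flipped head.

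First we fix conventions. The paper writes $Y=\widehat{O}W^O$ with tokens as rows, but partitions $W^O=[W_1^O\;\cdots\;W_H^O]$ with $W_h^O\in\mathbb{R}^{d_{\mathrm{model}}\times d_v}$. The natural reading is therefore that the output token is $\mathbf{y}_t:=Y[t,:]^\top\in\mathbb{R}^{d_{\mathrm{model}}}$, and that the row-convention projection matrix is $(W^O)^\top$, which is stacked vertically from the blocks $(W_h^O)^\top\in\mathbb{R}^{d_v\times d_{\mathrm{model}}}$. We state this identification explicitly, since it is the only place where the argument can fail. It is a bookkeeping matter, not a substantive obstacle.

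With that convention fixed, the $t$-th row of $\widehat{O}$ is the concatenation $(\hat o_1[t]^\top,\dots,\hat o_H[t]^\top)$. Here $\hat o_h=o_h$ for $h\neq h^\star$, and $\hat o_{h^\star}[t]=\widetilde{O}_{h^\star}[t,:]^\top$. Block matrix multiplication then gives
\[
\mathbf{y}_t=\sum_{h=1}^H W_h^O\,\hat o_h[t]
=\sum_{h\neq h^\star}W_h^O\,o_h[t]+W_{h^\star}^O\,\widetilde{O}_{h^\star}[t,:]^\top .
\]

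Finally, we set $t=t(i,j)$ and apply Lemma~\ref{lem:flip_correspondence_main}, which gives $\widetilde{O}_{h^\star}[t(i,j),:]=O_{h^\star}[t(i,P-j+1),:]$, that is, $o_{h^\star}[t(i,P-j+1)]$ after transposition. Substituting this yields Eq.~\ref{eq:final_token_theory_main}. We also note that $t(i,P-j+1)$ is a valid index because $j\mapsto P-j+1$ is a bijection of $\{1,\dots,P\}$, so the flipped head remains a well-defined permutation of the tokens. The untouched heads contribute at the original location $t(i,j)$ because the flip acts only on the token axis of head $h^\star$.
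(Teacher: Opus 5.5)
Your proposal is correct and follows the paper's proof essentially step for step. Like the paper, you expand $\mathbf{y}_t=\sum_{h} W_h^O\,\widehat{o}_h[t]$ using the head-wise block structure of the output projection, keep $\widehat{o}_h=o_h$ for $h\neq h^\star$, and substitute Lemma~\ref{lem:flip_correspondence_main} for the flipped head. Your explicit identification of the row-convention projection as $(W^O)^\top$ is a useful addition, because the paper's proof silently absorbs the transpose mismatch between $Y=\widehat{O}W^O$ and the stated block shapes $W_h^O\in\mathbb{R}^{d_{\mathrm{model}}\times d_v}$.
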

\noindent\emph{Proof.} See Appendix~\ref{app:proof_final_token}.

Proposition~\ref{prop:final_token_main} shows that the flipped-head
intervention injects mirrored contextual information into the token
representation while leaving all remaining heads anchored at the original
spatial location.

\subsection{Consistency Principles for Attention and U-Net Fusion}

We next formalize why consistency is necessary inside attention. At the level
of an isolated head, a shared spatial transformation preserves equivariance.
The inconsistency in our setting arises instead at the level of the full
multi-head module: if only part of the interacting computation is transformed,
the output projection mixes representations expressed in different geometric
frames.

\begin{proposition}[Attention consistency under a shared spatial transformation]
\label{prop:attn_consistency_main}
Let
$
Y=\mathrm{Concat}(O_1,\dots,O_H)W^O,
$
and let $\Pi_T$ denote the permutation induced by a spatial transformation $T$
on token indices. If all head outputs are transformed by the same $\Pi_T$, then
\begin{equation}
Y_{\mathrm{cons}}
=
\mathrm{Concat}(\Pi_T O_1,\dots,\Pi_T O_H)W^O
=
\Pi_T Y.
\label{eq:attn_consistency_main_theory}
\end{equation}
By contrast, if only a strict subset of heads is transformed, then in general
the resulting module output cannot be written as $\Pi_T Y$.
\end{proposition}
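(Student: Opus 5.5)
The first claim is a direct block-matrix computation, and the second is best handled by writing the partial-transform output as a perturbation of $\Pi_T Y$ and exhibiting an explicit instance where that perturbation is nonzero.

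\emph{Consistent case.} I will use that $\Pi_T\in\mathbb{R}^{N\times N}$ acts by left multiplication on the token (row) axis. Concatenation is along the column (channel) axis, so
\[
\mathrm{Concat}(\Pi_T O_1,\dots,\Pi_T O_H)=\Pi_T\,\mathrm{Concat}(O_1,\dots,O_H).
\]
Each block of the right-hand side is $\Pi_T O_h$, which is a row operation applied uniformly to every column block. Right-multiplying by $W^O$ and using associativity of matrix multiplication gives $Y_{\mathrm{cons}}=\Pi_T\,\mathrm{Concat}(O_1,\dots,O_H)W^O=\Pi_T Y$. Equivalently, in the head-wise form of Proposition~\ref{prop:final_token_main}, $Y=\sum_h O_h (W_h^O)^\top$, so $\sum_h \Pi_T O_h (W_h^O)^\top=\Pi_T Y$.

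\emph{Strict-subset case.} Let $S\subsetneq\{1,\dots,H\}$ be the set of transformed heads. By the same head-wise decomposition,
\[
Y_S=\sum_{h\in S}\Pi_T O_h (W_h^O)^\top+\sum_{h\notin S}O_h (W_h^O)^\top .
\]
Subtracting $\Pi_T Y$ gives
\[
Y_S-\Pi_T Y=\sum_{h\notin S}(I-\Pi_T)\,O_h (W_h^O)^\top .
\]
So $Y_S=\Pi_T Y$ holds exactly when $(I-\Pi_T)\sum_{h\notin S}O_h(W_h^O)^\top=0$. In words, the untransformed heads' total contribution to the output must itself be $T$-invariant on the token axis.

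The phrase ``in general'' should be read as: this condition fails for generic inputs and weights. I will establish this with an explicit counterexample, which is the main obstacle because it requires committing to a precise reading of ``cannot be written as $\Pi_T Y$''. I read it as $Y_S\neq\Pi_T Y$.
\begin{itemize}
\item Since $T$ is a non-identity reflection and $P\ge 2$, $\Pi_T\neq I$, so some token pair $t\neq T(t)$ is swapped.
\item Pick a head $h_0\notin S$ and choose $O_{h_0}$ with distinct rows at $t$ and $T(t)$.
\item Set $O_h=0$ for all other $h\notin S$.
\item Choose $W_{h_0}^O$ injective, for example padded with an identity block when $d_{\mathrm{model}}\ge d_v$.
\end{itemize}
With these choices, row $t$ of $(I-\Pi_T)O_{h_0}(W_{h_0}^O)^\top$ is $(o_{h_0}[t]-o_{h_0}[T(t)])^\top (W_{h_0}^O)^\top\neq 0$.

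The more delicate point is that the head outputs are attention outputs, not free matrices. They can still be realized: take $V_{h_0}=X W^V_{h_0}$ with distinct token rows, and uniform attention ($W^Q=0$) does not make rows identical only if routing is nontrivial. So I would instead set $W^Q_{h_0}=W^K_{h_0}$ large to get near-identity attention, making $O_{h_0}\approx V_{h_0}$, which has distinct rows.

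Finally, to strengthen the conclusion to genericity, I will note that the failure set is defined by polynomial and analytic equations in the parameters. A nonzero analytic function vanishes only on a measure-zero set, so failure at one point implies failure almost everywhere.
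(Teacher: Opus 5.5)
Your proof is correct and takes the same route as the paper. For the consistent case you factor $\Pi_T$ out of the concatenation, since it acts on the token axis while $W^O$ acts on channels; for a strict subset $S$ you pick an untransformed head $h_0\notin S$ whose output is not $\Pi_T$-invariant and zero out the other heads.

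Your identity $Y_S-\Pi_T Y=\sum_{h\notin S}(I-\Pi_T)O_h(W_h^O)^\top$, together with injectivity of $W_{h_0}^O$, makes precise the step the paper leaves as ``differ in general.'' The attention-realizability and genericity remarks are more than the statement needs, because it treats the $O_h$ as arbitrary matrices. If you keep them, note that uniform attention makes all rows equal. Also, taking $W^Q_{h_0}=W^K_{h_0}$ large yields near-identity attention only under a condition on the tokens, for example that the projected tokens are distinct with equal norms.
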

\noindent\emph{Proof.} See Appendix~\ref{app:proof_attn_consistency}.

An analogous principle holds for U-Net feature fusion, where the interacting
paths are the branches entering a skip-connected fusion operator.

\begin{proposition}[Skip consistency under simultaneous transformation]
\label{prop:skip_consistency_main}
Consider a skip pair $(\ell,\ell')$ at matched spatial resolution, with
encoder feature map
$
F_{\mathrm{enc}}^{(\ell)}\in\mathbb{R}^{C_{\mathrm{enc}}\times H\times W}
$
and decoder feature map
$
F_{\mathrm{dec}}^{(\ell')}\in\mathbb{R}^{C_{\mathrm{dec}}\times H\times W}.
$
Let
$
G^{(\ell,\ell')}
=
\phi\!\left(F_{\mathrm{dec}}^{(\ell')},F_{\mathrm{enc}}^{(\ell)}\right)
$
be the fused representation, where
$
\phi:\mathbb{R}^{C_{\mathrm{dec}}\times H\times W}
\times
\mathbb{R}^{C_{\mathrm{enc}}\times H\times W}
\to
\mathbb{R}^{C_{\mathrm{out}}\times H\times W}
$
is a skip-fusion operator. Assume that $\phi$ is equivariant under
simultaneous spatial permutation, i.e.
\[
\phi(\Pi_T a,\Pi_T b)=\Pi_T\phi(a,b).
\]
Then transforming both branches by the same $\Pi_T$ preserves a common
transformed frame:
\begin{equation}
\phi(\Pi_T F_{\mathrm{dec}}^{(\ell')},\Pi_T F_{\mathrm{enc}}^{(\ell)})
=
\Pi_T\,\phi(F_{\mathrm{dec}}^{(\ell')},F_{\mathrm{enc}}^{(\ell)}).
\label{eq:skip_consistency_main_theory}
\end{equation}
Transforming only one branch generally does not preserve this relation and thus
introduces geometric mismatch at fusion.
\end{proposition}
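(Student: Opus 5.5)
The first claim, equation~\eqref{eq:skip_consistency_main_theory}, follows directly from the equivariance hypothesis on $\phi$. I will substitute $a=F_{\mathrm{dec}}^{(\ell')}$ and $b=F_{\mathrm{enc}}^{(\ell)}$ into $\phi(\Pi_T a,\Pi_T b)=\Pi_T\phi(a,b)$. To make this meaningful, I will first specify how $\Pi_T$ acts on a tensor in $\mathbb{R}^{C\times H\times W}$: it permutes the $H\times W$ spatial index set by the bijection $\sigma_T$ induced by the reflection ($\sigma_T$ requires $H=W$ for the diagonal and anti-diagonal flips), and it acts identically on the channel axis. With this convention the same symbol $\Pi_T$ acts on the encoder, decoder and output spaces even though their channel counts differ. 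This matches the matched-resolution assumption of the skip pair. I will also note, as a sanity check on the hypothesis, that the motivating examples satisfy it. Channel concatenation commutes with any spatial permutation. A $1\times 1$ convolution is pointwise in space, so it commutes as well. A $k\times k$ convolution commutes with $\Pi_T$ when its kernel is invariant under the corresponding reflection. Otherwise the hypothesis is a genuine assumption rather than a fact about general $\phi$, and the proposition is stated conditionally for exactly this reason.

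The second claim is that one-sided transformation ``generally'' fails. Since it is stated with ``generally,'' the right proof is a counterexample showing the relation is not implied. Take $\phi(a,b)=a+Mb$ with $M$ a channel-mixing matrix, or simply $\phi(a,b)=a+b$ when $C_{\mathrm{dec}}=C_{\mathrm{enc}}$. This $\phi$ satisfies the equivariance hypothesis. For the decoder-only transform I will compute
\[
\phi(\Pi_T a,b)-\Pi_T\phi(a,b)=b-\Pi_T b .
\]
This difference is nonzero whenever $b$ is not $T$-symmetric. An example is a feature map supported on a single off-axis pixel under a horizontal flip. The encoder-only case is symmetric: the discrepancy is $a-\Pi_T a$, up to applying $M$. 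More generally, for the inconsistent fusion to equal $\Pi_T\phi(a,b)$ one would need $\phi(\Pi_T a,b)=\phi(\Pi_T a,\Pi_T b)$ for all inputs. That means $\phi$ must be insensitive to reflecting its second argument, which is a degenerate condition. I will state the result this way: equality holds for all inputs only if $\phi$ is invariant in the untransformed branch under $\Pi_T$.

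I expect the main obstacle to be pinning down the meaning of ``generally does not preserve'' so that it is a theorem and not a slogan. I would make it precise in two steps. First, exhibit the counterexample above. Second, show the failure is open and dense in a concrete sense: for the linear fusion the mismatch set is $\{b:\Pi_T b\neq b\}$, the complement of a proper linear subspace, so it has full measure. The first claim needs nothing beyond the hypothesis.
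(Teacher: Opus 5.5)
Your proposal is correct and follows the paper's proof. The identity is the equivariance hypothesis instantiated at $a=F_{\mathrm{dec}}^{(\ell')}$, $b=F_{\mathrm{enc}}^{(\ell)}$, and the one-sided failure is shown with additive fusion, whose discrepancy $b-\Pi_T b$ vanishes only for $T$-symmetric $b$. Your refinements are correct and go beyond the paper: under the hypothesis, one-sided transformation is exact for all inputs iff $\phi$ is $\Pi_T$-invariant in the untransformed argument; and a generic $k\times k$ convolution violates the hypothesis, a caveat the paper's closing mention of concatenation plus a shared convolution leaves implicit. One slip: for $\phi(a,b)=a+Mb$ the factor $M$ belongs to the decoder-only discrepancy $M(b-\Pi_T b)$, while the encoder-only discrepancy is exactly $a-\Pi_T a$.
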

\noindent\emph{Proof.} See Appendix~\ref{app:proof_skip_consistency}.

Together, Propositions~\ref{prop:attn_consistency_main}
and~\ref{prop:skip_consistency_main} explain the mismatch patterns observed in
the experiments: stable interventions preserve geometric alignment, whereas
partial transformations introduce structured inconsistency across interacting
computations.

\subsection{Symmetry Constraints as Regularization}

Beyond stability, geometric consistency also admits an idealized statistical
interpretation. Let $\mathcal{H}$ be a hypothesis class of predictors
$h:\mathcal{X}\to\mathbb{R}$, and let $T:\mathcal{X}\to\mathcal{X}$ be a
symmetry transformation. We define the symmetry-constrained subclass
$
\mathcal{H}_{\mathrm{sym}}
=
\{h\in\mathcal{H}:\; h(x)=h(T(x))\ \text{for all }x\in\mathcal{X}\}.
$
By construction,
$
\mathcal{H}_{\mathrm{sym}}\subseteq\mathcal{H}.
$
Now let $S=\{x_i\}_{i=1}^n$ be a sample, and let
$\sigma_1,\dots,\sigma_n\overset{\mathrm{i.i.d.}}{\sim}\mathrm{Unif}\{-1,+1\}$
denote Rademacher random variables. The empirical Rademacher complexity of a
scalar-valued function class $\mathcal{F}$ is
\begin{equation}
\widehat{\mathcal{R}}_S(\mathcal{F})
=
\mathbb{E}_{\sigma}
\left[
\sup_{f\in\mathcal{F}}
\frac{1}{n}\sum_{i=1}^n \sigma_i f(x_i)
\right].
\label{eq:rademacher_main}
\end{equation}
Since $\mathcal{H}_{\mathrm{sym}}\subseteq\mathcal{H}$, monotonicity gives
$
\widehat{\mathcal{R}}_S(\mathcal{H}_{\mathrm{sym}})
\le
\widehat{\mathcal{R}}_S(\mathcal{H}).
$
To connect this to generalization, let
$
\psi:\mathbb{R}\times\mathcal{Y}\to[0,1]
$
be a loss function that is $L$-Lipschitz in its first argument, and define the
induced loss class
\[
\mathcal{L}_{\mathcal{H}}
=
\{(x,y)\mapsto \psi(h(x),y): h\in\mathcal{H}\}.
\]
Since $\mathcal{H}_{\mathrm{sym}}\subseteq\mathcal{H}$, we also have
$
\mathcal{L}_{\mathcal{H}_{\mathrm{sym}}}
\subseteq
\mathcal{L}_{\mathcal{H}}.
$
Therefore,
$
\widehat{\mathcal{R}}_S(\mathcal{L}_{\mathcal{H}_{\mathrm{sym}}})
\le
\widehat{\mathcal{R}}_S(\mathcal{L}_{\mathcal{H}}).
$
A standard Rademacher-complexity bound then implies that, with probability at
least $1-\delta$ over a sample $S=\{(x_i,y_i)\}_{i=1}^n$,
\begin{equation}
\mathcal{E}(h)
\le
\widehat{\mathcal{E}}_S(h)
+
2\,\widehat{\mathcal{R}}_S(\mathcal{L}_{\mathcal{H}})
+
3\sqrt{\frac{\log(2/\delta)}{2n}},
\label{eq:gen_bound_main}
\end{equation}
where $\mathcal{E}(h)$ and $\widehat{\mathcal{E}}_S(h)$ denote the population
and empirical risks, respectively \citep{bartlett2002rademacher}.Applying the same bound to the symmetry-constrained subclass, equivalently to
the induced loss class $\mathcal{L}_{\mathcal{H}_{\mathrm{sym}}}$, yields a
no-larger complexity term. This provides an idealized interpretation of symmetry-consistent
hidden-state augmentation as a form of capacity-reducing regularization. A proof of the monotonicity argument is
provided in Appendix~\ref{app:proof_symmetry_regularization}. This argument is
idealized and is intended to explain the regularizing effect of symmetry
constraints at the level of function classes, rather than as a formal
generalization theorem for the full diffusion training pipeline. This argument establishes monotonicity of the complexity term, but does not
quantify the magnitude of the reduction, which depends on how restrictive the
symmetry constraint is for the underlying hypothesis class.

\begin{figure*}[ht]
    \centering
    \begin{tabular}{cc}
        \includegraphics[width=0.48\linewidth]{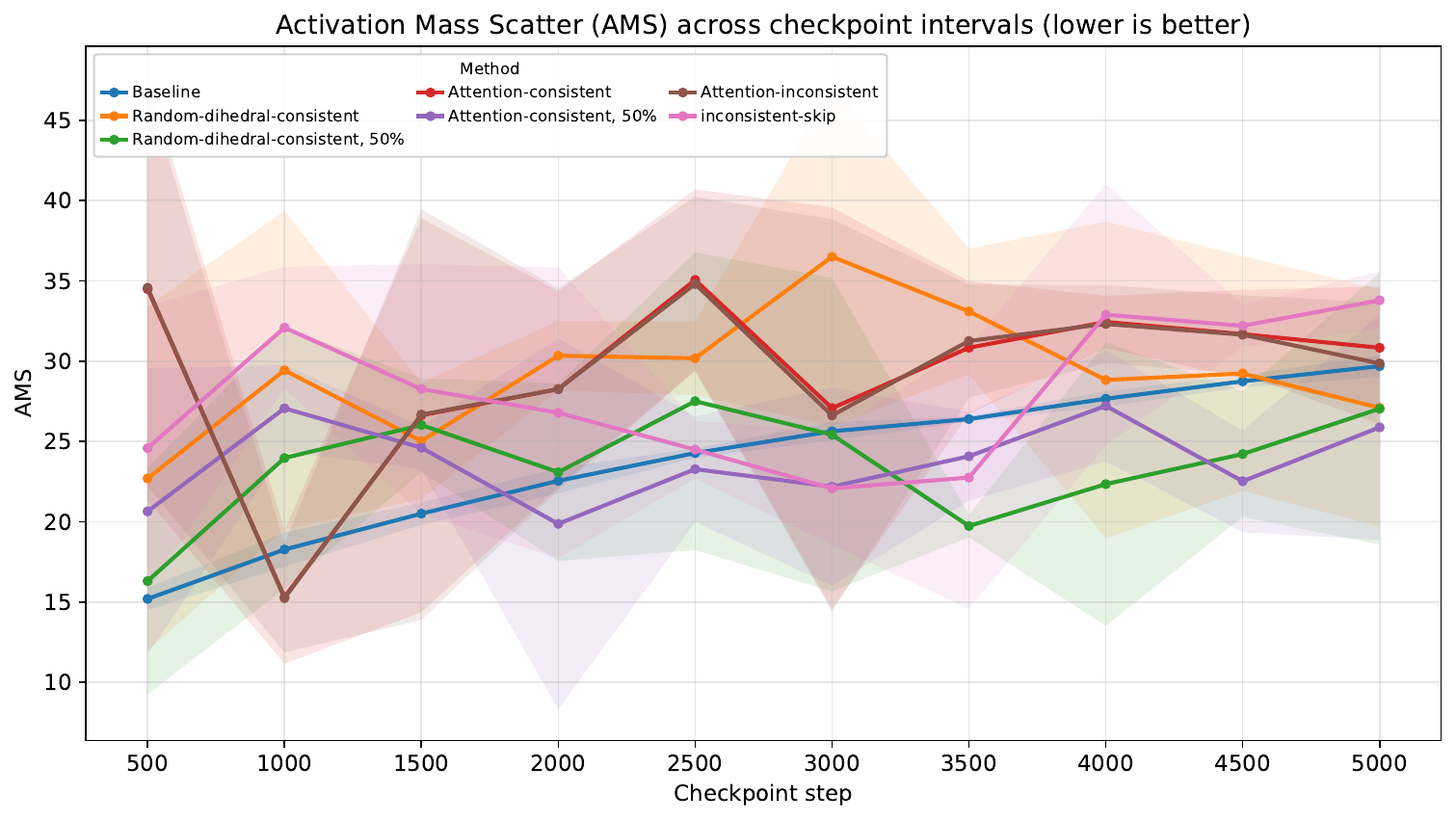} &
        \includegraphics[width=0.48\linewidth]{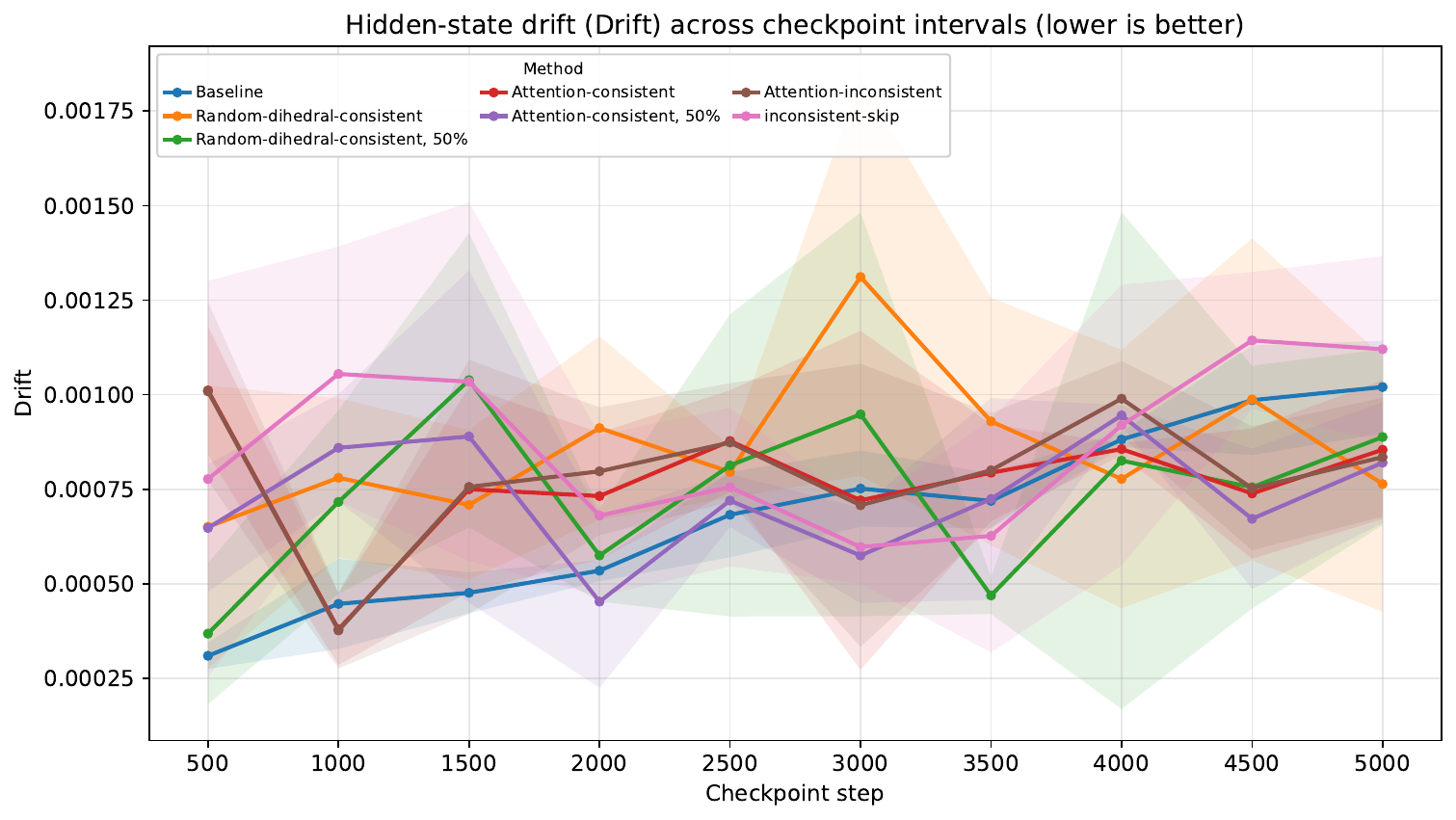} \\
        (a) Activation Mass Scatter (AMS)$\downarrow$ &
        (b) Drift$\downarrow$ \\[8pt]

        \includegraphics[width=0.48\linewidth]{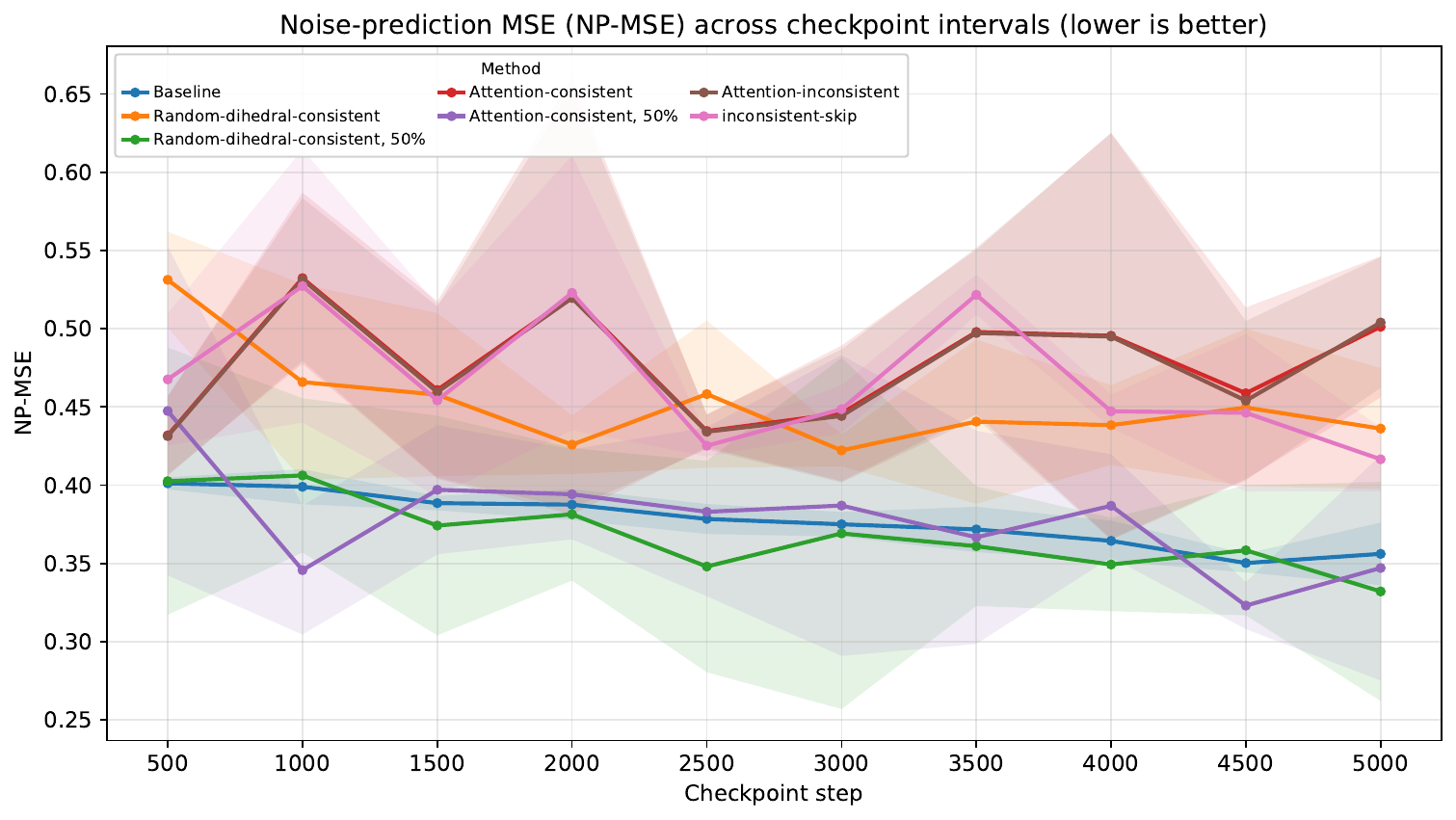} &
        \includegraphics[width=0.48\linewidth]{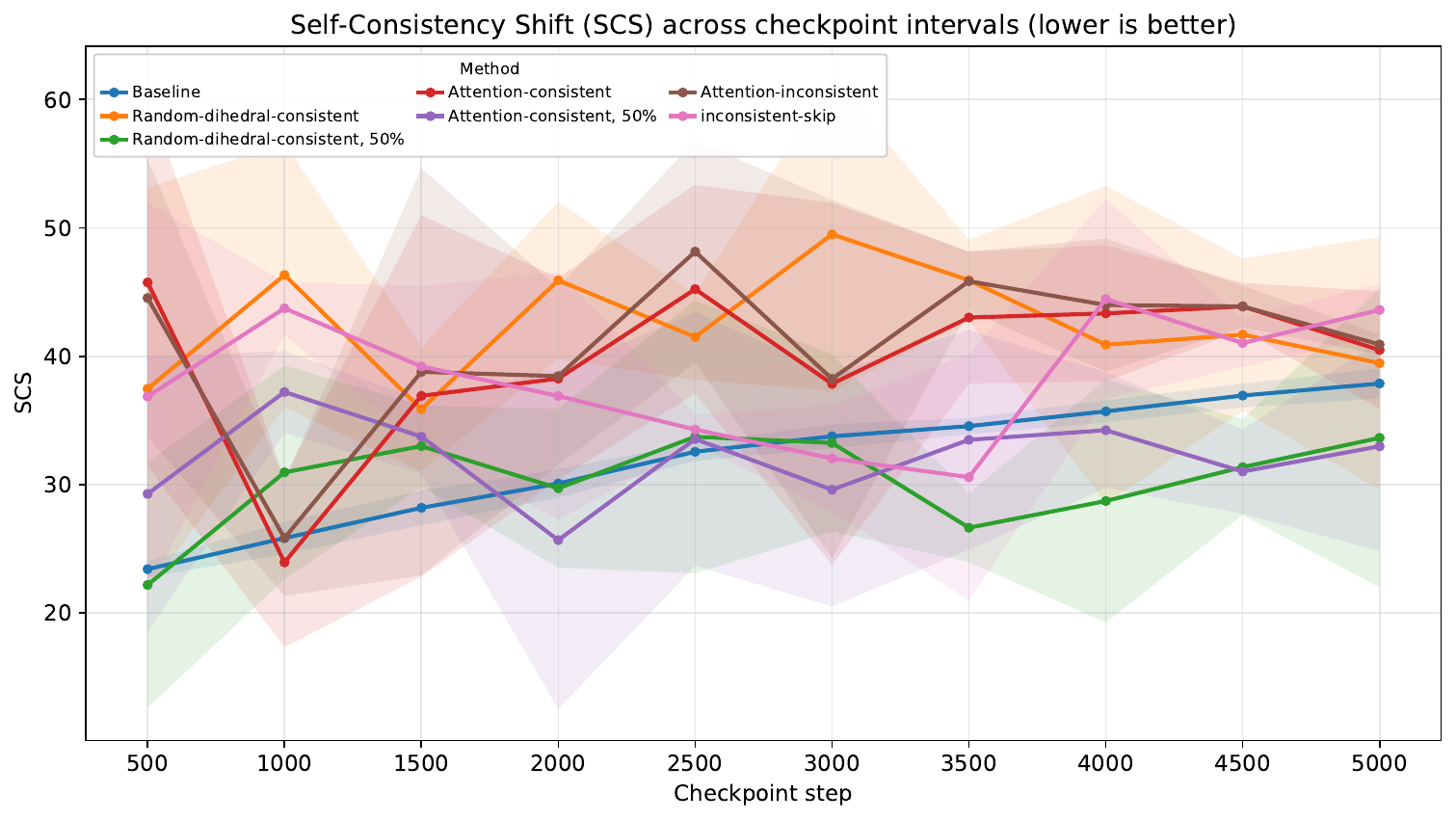} \\
        (c) Noise-Prediction MSE (NP-MSE)$\downarrow$  &
        (d) Self-Consistency Shift (SCS)$\downarrow$
    \end{tabular}
    \caption{
Quantitative evaluation over 5{,}000 steps for seven modes, averaged over three seeds (lower values are better).
}
    \label{fig:combined-metrics}
\end{figure*}
\section{Experiments}
\label{sec:experiments}

We evaluate geometric hidden-state interventions through a main empirical study
on the Stable Diffusion 2.1 U-Net, complemented by additional experiments on
other diffusion backbones, FID-based comparisons, and supporting synthetic and
transformer analyses.

\subsection{Primary Experimental Testbed}
\label{sec:exp_setup}

As one controlled text-to-image testbed, we use the Stable Diffusion 2.1 U-Net,
fine-tuned under a unified intervention framework that applies one sampled
geometric transformation at one sampled location per forward pass. We
train seven modes: \emph{baseline},
\emph{random-dihedral-consistent}, \emph{attention-consistent},
their 50\% variants, and two inconsistent controls,
\emph{attention-inconsistent} and \emph{inconsistent-skip}.
In the 50\% variants, the intervention is applied to a random half of
minibatches; the remaining minibatches use the clean baseline update.
Depending on the mode, interventions are applied at ResNet blocks, attention
blocks, or skip-fusion locations. For controlled experiments, we use 500
randomly sampled Oxford-IIIT Pet images resized to $512\times512$ and
fine-tune for 5{,}000 steps over three seeds. Additional details are deferred
to Appendices~\ref{app:sd_framework_appendix}
and~\ref{app:sd_experiment_detail_appendix}.

We use Oxford-IIIT Pet as a controlled natural-image testbed with a clear
cat/dog split and diverse pose, texture, and background variation. We use the
same random 500-image subset for all modes to keep the seven-mode, three-seed
comparison tractable and controlled.

\subsection{Evaluation Metrics}
\label{sec:exp_metrics}

We evaluate both denoising fidelity and internal geometric stability using four
metrics: \emph{Noise-Prediction Mean Squared Error (NP-MSE)}, \emph{Self-Consistency Shift} (SCS),
\emph{Activation Mass Scatter} (AMS), and \emph{Drift}. The first measures
noise-prediction fidelity, while the latter three quantify complementary
aspects of hidden-state stability under intervention. Lower values are better
for all metrics.

\paragraph{Noise-Prediction MSE (NP-MSE).}
Because full Fréchet Inception Distance (FID) requires large-scale image
generation and an additional Inception-network pass, we use a lightweight
fidelity proxy based on the standard diffusion noise-prediction error:
\begin{equation}
\mbox{$\mathrm{NP\mbox{-}MSE}$}
=
\mathbb{E}_{x_t,t}\big[\|\hat{\varepsilon}_\theta(x_t,t)-\varepsilon\|_2^2\big].
\end{equation}
Here, $x_t$ denotes the noisy latent at diffusion step $t$, $\varepsilon$ the
injected noise, and $\hat{\varepsilon}_\theta(x_t,t)$ the model prediction.
Lower values indicate better denoising performance. Despite the shorthand name,
this metric is simply the denoising noise-prediction error and should not be
interpreted as image-space FID. We use it only as a practical fidelity signal
for our controlled setting.

\paragraph{Self-Consistency Shift (SCS).}
Let $A$ denote the clean feature map and $\widetilde{A}$ the feature map
obtained under a transformation $T$. After aligning the intervened map back to
the clean frame via $\widetilde{A}^{\mathrm{align}}=T^{-1}(\widetilde{A})$, we
define
\begin{equation}
\mathrm{SCS}
=
\frac{1}{BCHW}\|A-\widetilde{A}^{\mathrm{align}}\|_1.
\end{equation}
Lower SCS indicates stronger geometric self-consistency. The benefit of SCS is
that it directly measures whether a symmetry-preserving intervention leaves the
internal representation stable once the intended geometric transformation is
factored out.

\paragraph{Activation Mass Scatter (AMS).}
For a feature map $A\in\mathbb{R}^{B\times C\times H\times W}$, let
$
m_b(h,w)=\sum_{c=1}^{C}|A_{b,c,h,w}|
$
be the spatial activation mass for sample $b$, and let
$
\bar m_b(h,w)=\frac{m_b(h,w)}{\sum_{h,w}m_b(h,w)}
$
be its normalized version. If $\mu_b\in\mathbb{R}^2$ denotes the center of mass
of $\bar m_b$, we define
\begin{equation}
\mathrm{AMS}
=
\frac{1}{B}\sum_{b=1}^{B}\sum_{h,w}\bar m_b(h,w)\,
\bigl\|(h,w)-\mu_b\bigr\|_2^2.
\end{equation}
Lower AMS indicates that activation mass is more spatially concentrated. The
benefit of AMS is that it captures whether an intervention makes the
representation more spatially organized or instead spreads activation mass more
diffusely across the feature map.

\begin{figure*}[tb!]
    \centering
    \begin{tabular}{cc}
        \includegraphics[width=0.48\linewidth]{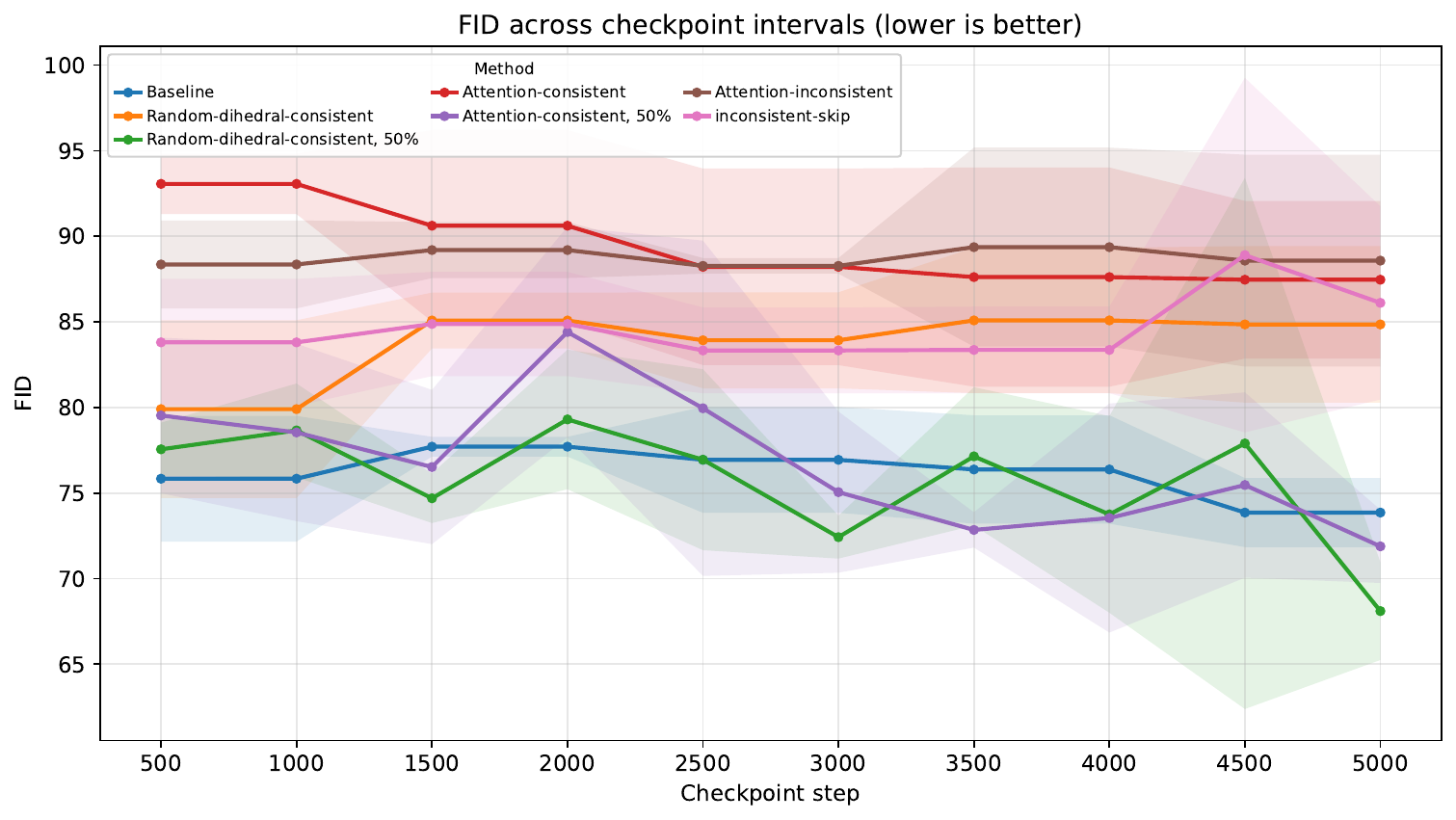} &
        \includegraphics[width=0.48\linewidth]{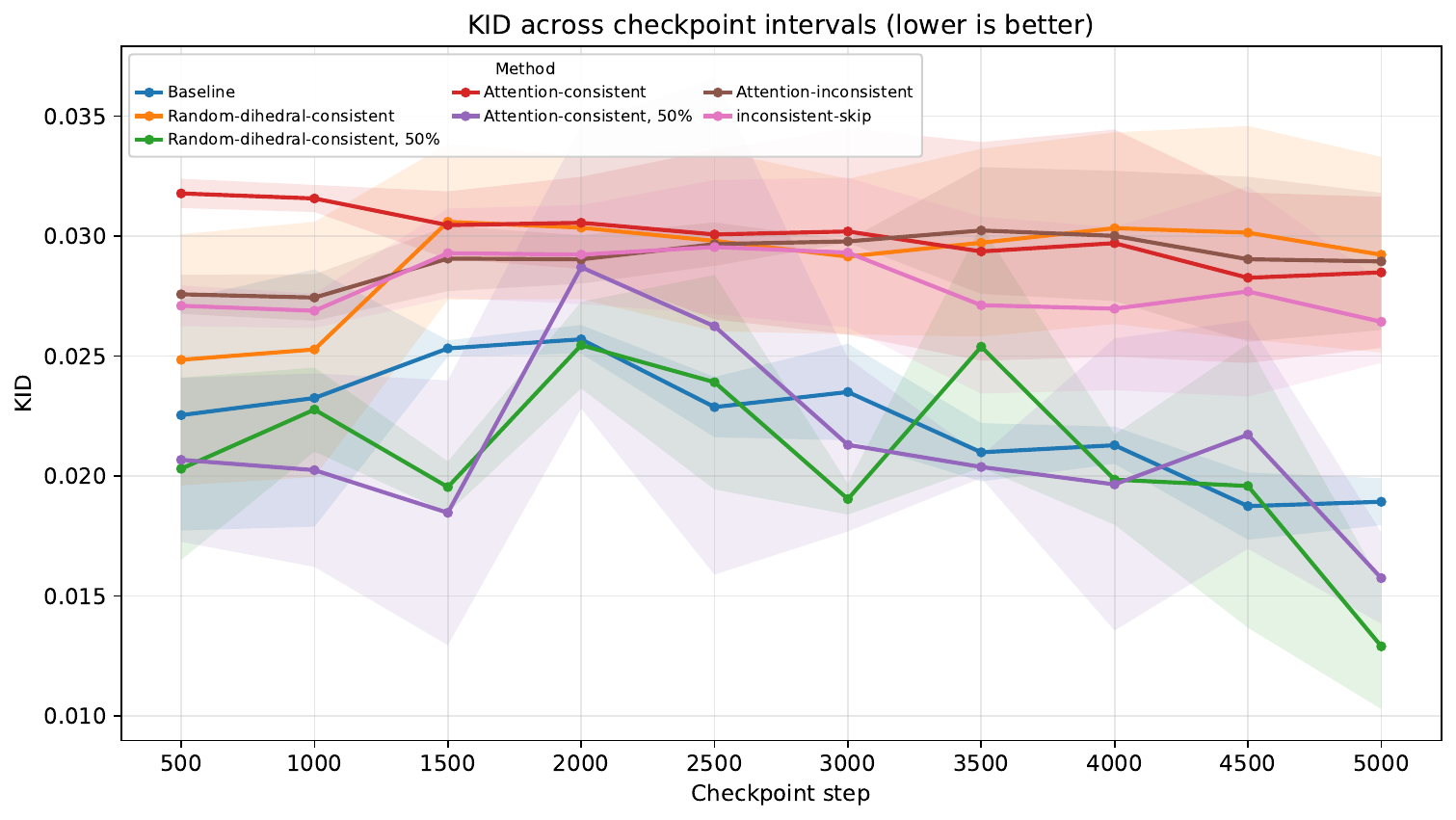} \\
        (a) FID $\downarrow$ &
        (b) KID $\downarrow$ \\[8pt]

        \includegraphics[width=0.48\linewidth]{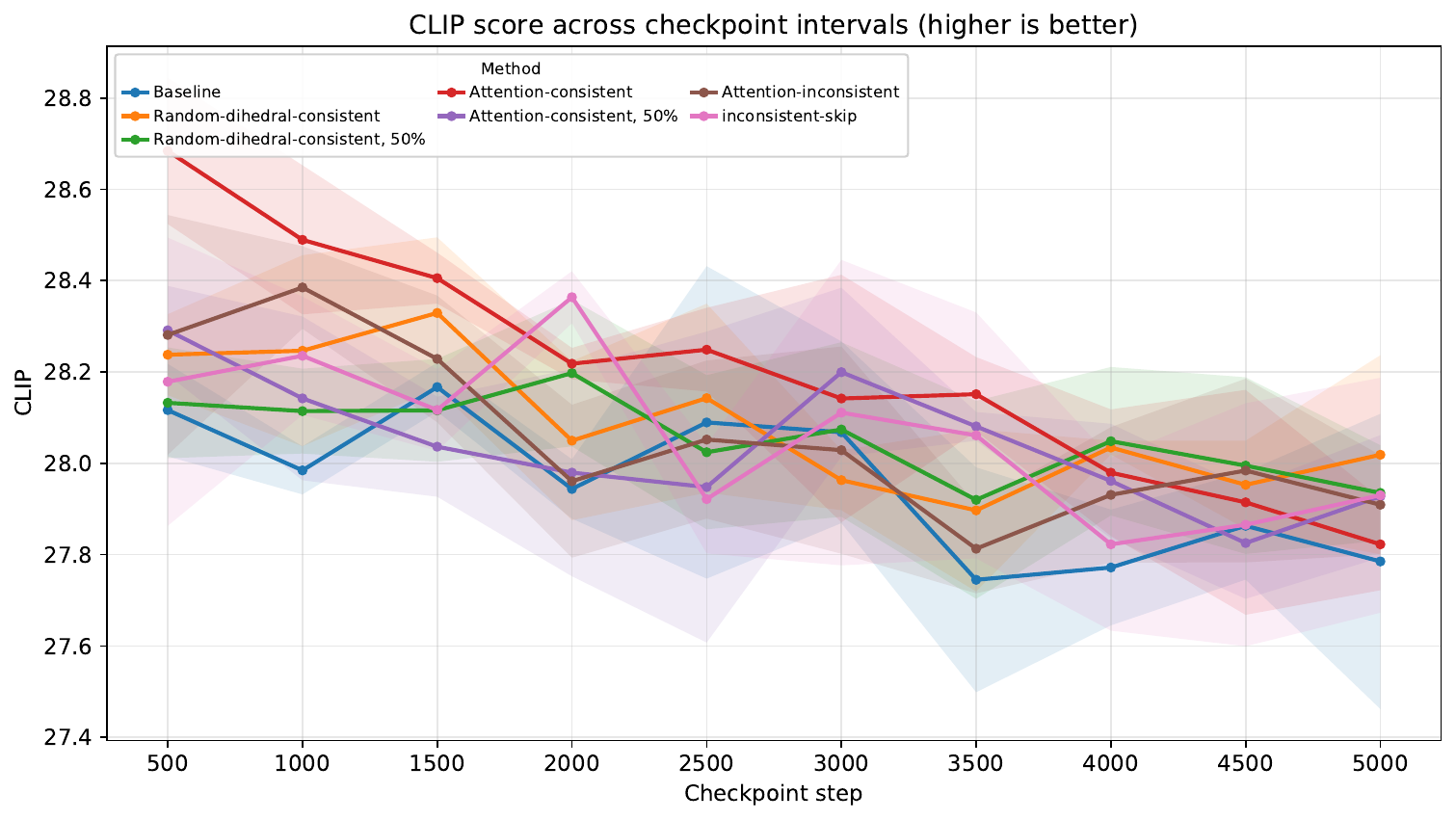} &
        \includegraphics[width=0.48\linewidth]{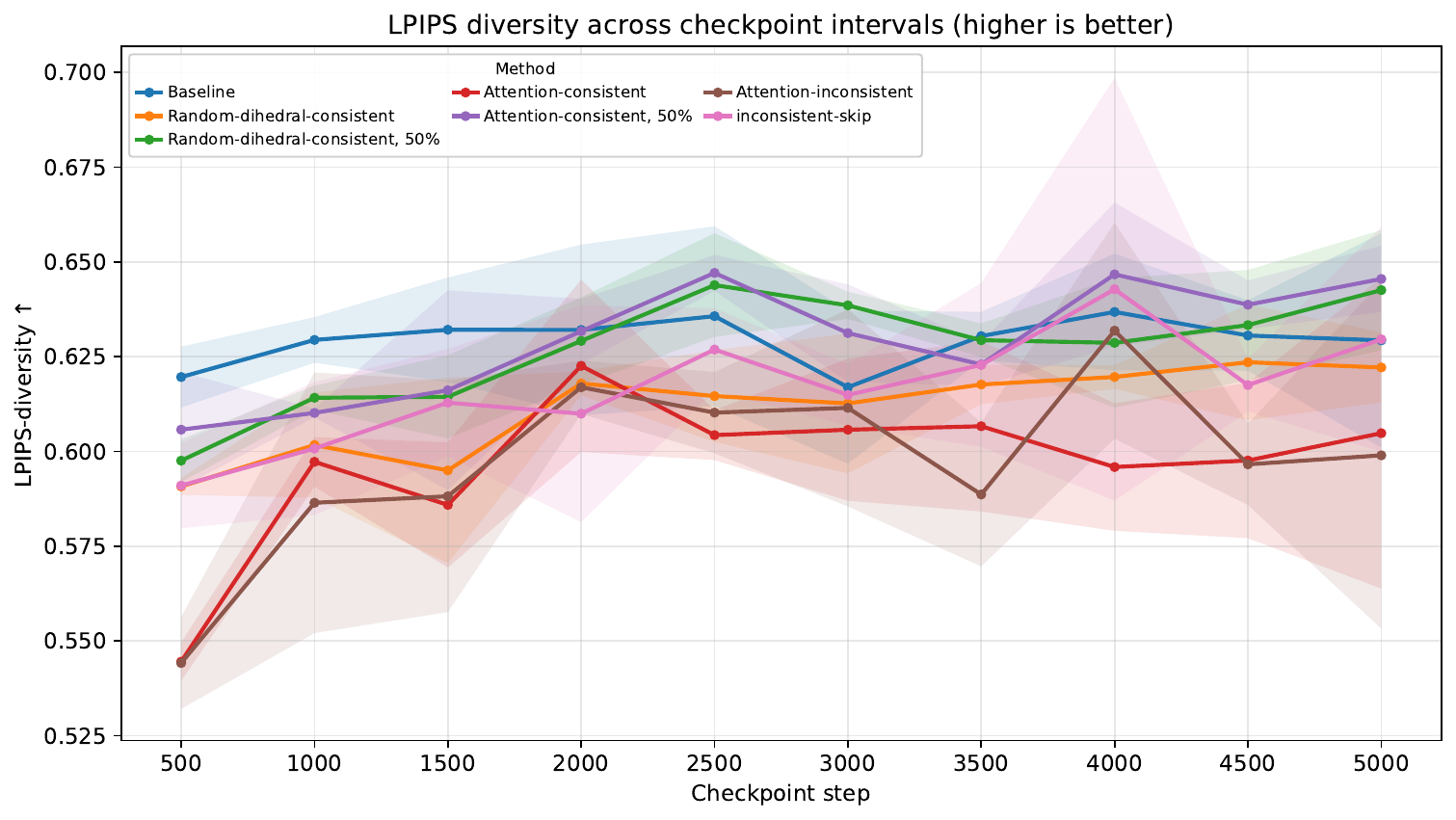} \\
        (c) CLIP score $\uparrow$ &
        (d) LPIPS diversity $\uparrow$
    \end{tabular}
    \caption{
Image-level evaluation over 5{,}000 steps for seven modes and three seeds.
Each mode--seed--checkpoint uses 500 images, totaling 105{,}000 images. Shading shows seed variability.
}
    \label{fig:external-image-metrics}
\end{figure*}

\paragraph{Drift.}
Let $\mu_b$ and $\widetilde{\mu}_b^{\mathrm{align}}$ denote the centers of mass
of the clean and aligned intervened activation maps, respectively. We define
\begin{equation}
\mathrm{Drift}
=
\frac{1}{B}\sum_{b=1}^{B}
\bigl\|\mu_b-\widetilde{\mu}_b^{\mathrm{align}}\bigr\|_2^2.
\end{equation}
Lower Drift indicates stronger global spatial stability under intervention.
Because it captures coarse displacement even when feature content remains
similar, Drift complements SCS. Together, these metrics separate denoising
fidelity from internal geometric stability: NP-MSE measures functional
performance, whereas SCS, AMS, and Drift measure feature consistency, spatial
spread, and global displacement. Additional details are provided in
Appendix~\ref{app:metric_defs_appendix}.

\subsection{Main Results on Stable Diffusion U-Net}
\label{sec:main_results}
We evaluate all seven modes for 5{,}000 steps over three seeds to test whether
hidden-state transformations preserve internal spatial stability, a behavior
not isolated by input-level augmentation alone. Figure~\ref{fig:combined-metrics}
reports NP-MSE, SCS, AMS, and Drift at 500-step intervals; lower is better for
all four metrics. The 50\% consistent variants give the best
stability--fidelity trade-off, reducing geometry-sensitive metrics relative to
full interventions and inconsistent controls while keeping NP-MSE competitive.
We further report image-level FID, KID, CLIP score, and LPIPS diversity at the
same checkpoints (Figure~\ref{fig:external-image-metrics}). For each mode,
seed, and checkpoint, we generate 500 images, 250 cat-prompt and 250 dog-prompt,
for 105{,}000 images total. The 50\% consistent variants remain competitive on
FID/KID while preserving comparable CLIP and LPIPS diversity, indicating that
the internal stability gains do not degrade image-level quality or diversity.
Random qualitative samples, selected uniformly from the full image pool, are
provided in Appendix~\ref{app:qualitative_samples}.

\subsection{Proof-of-Concept FID Comparisons on CIFAR-10, CelebA-64, and MNIST}
\label{sec:baseline_cifar_celeba_mnist}

To complement the main Stable Diffusion 2.1 study, we report small-scale
proof-of-concept FID comparisons on CIFAR-10, CelebA-64, and MNIST. For CIFAR-10 and CelebA-64, we
use the reproduced DDPM baselines of \citep{proszewska2025dmz}; for MNIST, we
use the SDiT reference of \citep{yang2024sdit}. In all regularized variants, we
keep the baseline architecture, diffusion objective, optimizer, sampling
procedure, and evaluation pipeline fixed, and add only our regularization term.

In preliminary tuning, we found that applying the regularizer too rarely left
training too close to the baseline, whereas applying it too frequently degraded
performance. We therefore apply the regularization to a randomly selected half
of the mini-batches during training. Table~\ref{tab:baseline_vs_regularized}
reports the baseline references together with our measured regularized results
under this 50\% schedule. Additional dataset, training, and implementation
details are provided in Appendix~\ref{app:proof_of_concept_details}. These
results should be interpreted as proof-of-concept supporting comparisons under
limited computational resources, rather than as definitive cross-dataset
benchmark claims.

\begin{table}[h]
\centering
\caption{
Baseline references and measured regularized results. Lower FID is better.
Regularized results are mean $\pm$ std. over three runs for CIFAR-10 and
MNIST, and two runs for CelebA-64.
}
\label{tab:baseline_vs_regularized}
\small
\begin{tabular}{llccc}
\toprule
Dataset & Model & Backbone & Resolution & FID $\downarrow$ \\
\midrule
CIFAR-10  & Vanilla \citep{proszewska2025dmz} & DDPM & $32\times32$ & $4.46$  \\
CIFAR-10  & Ours                               & DDPM & $32\times32$ & $\mathbf{4.30 \pm 0.05}$ \\
\midrule
CelebA-64 & Vanilla \citep{proszewska2025dmz} & DDPM & $64\times64$ & $4.51$ \\
CelebA-64 & Ours                               & DDPM & $64\times64$ & $\mathbf{4.53 \pm 0.33}$ \\
\midrule
MNIST     & Vanilla \citep{yang2024sdit}      & SDiT & $28\times28$ & $5.54$ \\
MNIST     & Ours                               & SDiT & $28\times28$ & $\mathbf{5.29 \pm 0.08}$ \\
\bottomrule
\end{tabular}
\end{table}

\subsection{Additional Studies}
\label{sec:additional_studies}

We complement the main U-Net study with supporting transformer-based analyses.
The ViT experiments test geometric consistency in a simpler single-pass
setting, while the DiT study illustrates why the same issue becomes more
consequential in iterative denoisers.
\begin{table*}[t]
\centering
\small

\begin{minipage}[t]{0.44\textwidth}
\centering
\captionof{table}{ViT-B/16 classification accuracy on CIFAR-100. Higher is better.}
\label{tab:vit_cls_main}
\begin{tabular}{lc}
\toprule
Setting & Acc. (\%) \\
\midrule
Input aug. ($224\times224$) & 0.9230 \\
Input aug. ($384\times384$) & 0.9228 \\
Input/hidden aug. ($224\times224$) & 0.9319 \\
\bottomrule
\end{tabular}
\end{minipage}
\hfill
\begin{minipage}[t]{0.53\textwidth}
\centering
\captionof{table}{Synthetic probe consistency scores $S^{(5)}$.}
\label{tab:synthetic_probe_scores}
\begin{tabular}{lcc}
\toprule
Probe & $4\times4$ & $8\times8$ \\
\midrule
H. stripe   & 0.570 & 0.548 \\
V. stripe   & 0.581 & 0.523 \\
Diag. grad. & 0.624 & 0.650 \\
Gaussian    & 1.000 & 1.000 \\
\bottomrule
\end{tabular}
\end{minipage}

\end{table*}

\begin{figure*}[t]
\centering
\setlength{\tabcolsep}{2pt}
\begin{tabular}{cccccccc}
\includegraphics[width=0.11\textwidth]{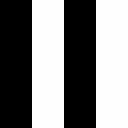} &
\includegraphics[width=0.11\textwidth]{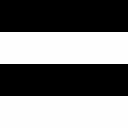} &
\includegraphics[width=0.11\textwidth]{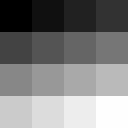} &
\includegraphics[width=0.11\textwidth]{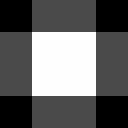} &
\includegraphics[width=0.11\textwidth]{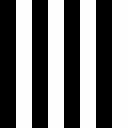} &
\includegraphics[width=0.11\textwidth]{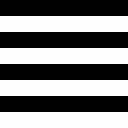} &
\includegraphics[width=0.11\textwidth]{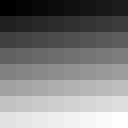} &
\includegraphics[width=0.11\textwidth]{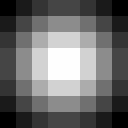} \\

{\scriptsize (a) H-stripe} &
{\scriptsize (b) V-stripe} &
{\scriptsize (c) Diag. grad.} &
{\scriptsize (d) Gaussian} &
{\scriptsize (e) H-stripe} &
{\scriptsize (f) V-stripe} &
{\scriptsize (g) Diag. grad.} &
{\scriptsize (h) Gaussian} \\

{\scriptsize $4\times4,\ D_1$} &
{\scriptsize $4\times4,\ D_1$} &
{\scriptsize $4\times4,\ C_2$} &
{\scriptsize $4\times4,\ D_4$} &
{\scriptsize $8\times8,\ D_1$} &
{\scriptsize $8\times8,\ D_1$} &
{\scriptsize $8\times8,\ C_2$} &
{\scriptsize $8\times8,\ D_4$}
\end{tabular}
\caption{
\textbf{Synthetic symmetry probes with $D_4$ subgroup annotations.}
Controlled probes for measuring geometric sensitivity and anisotropy in
ViT attention under horizontal reflection.
}
\label{fig:synthetic-probes}
\end{figure*}
\subsubsection{ViT Hidden-State Augmentation Improves Generalization}
\label{sec:vit_results_main}

We study a pretrained ViT-B/16 \citep{dosovitskiy2020vit} on CIFAR-100
\citep{krizhevsky2009learning} as a simple transformer testbed. We fine-tune it
for five epochs under three settings: standard input augmentation at
$224\times224$, input augmentation at $384\times384$, and our
input-or-hidden (I/H) augmentation strategy, which applies reflection-based
transformations either to the input or to intermediate hidden states. As shown
in Table~\ref{tab:vit_cls_main}, I/H performs best, improving accuracy from
$92.30\%$ and $92.28\%$ to $93.19\%$. This suggests that symmetry-aware
hidden-state augmentation can already act as an effective regularizer in a
standard ViT setting.

\begin{figure}[t]
    \centering
    \includegraphics[width=0.9\linewidth]{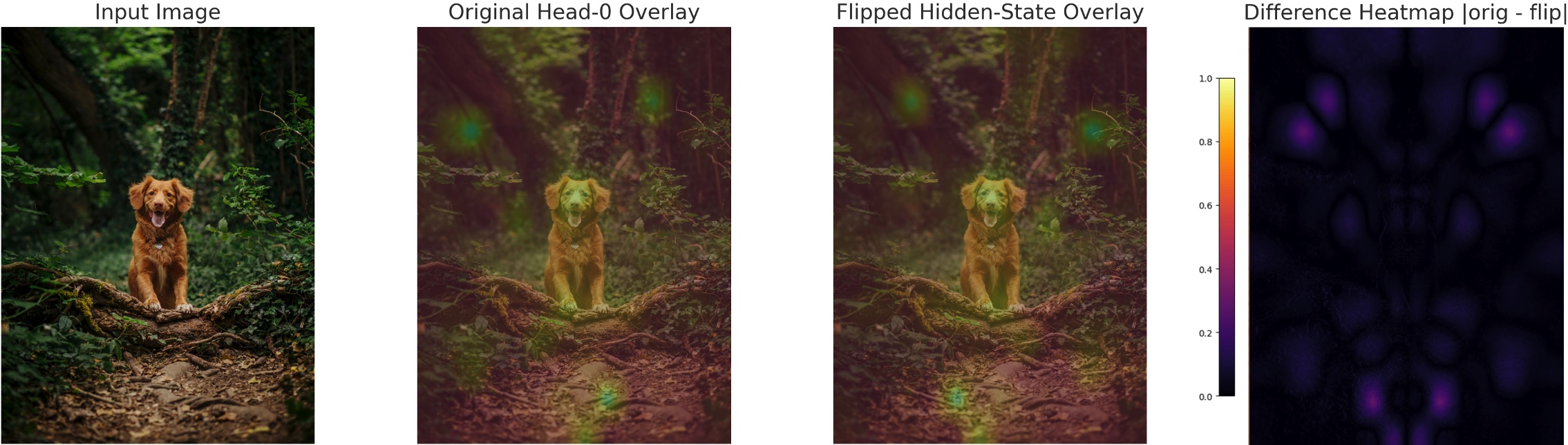}
    \caption{
        \textbf{Hidden-state flipping in ViT-B/16.}
        From left to right: input image, original Block~5 Head~0 attention,
        attention after flipping the Block~5 patch-token hidden state, and the
        difference map $|A-\tilde{A}|$. The intervention preserves semantic
        focus while mirroring spatial routing.
    }
    \label{fig:flip-vis}
\end{figure}

\subsubsection{Synthetic and Real-Image Attention Analysis}
\label{sec:vit_analysis_main}

To examine the geometric effect of hidden-state flipping more directly, we
consider both controlled synthetic probes and real-image attention
visualizations. The synthetic probes isolate geometric structure, while the
real-image study tests whether semantic behavior is preserved.

\paragraph{Synthetic probe evaluation.}
We construct synthetic inputs with explicit geometric structure, including
horizontal and vertical stripes, diagonal gradients, and Gaussian-like blobs on
$4\times4$ and $8\times8$ grids. Using a pretrained ViT-B/16, we compare the
attention response of each probe to that of its horizontally flipped
counterpart via
\begin{equation}
S^{(\ell)}
=
\frac{1}{M}\sum_{i=1}^{M}
\mathrm{cosine}\!\left(A^{(\ell)}(x_i),A^{(\ell)}(T(x_i))\right),
\end{equation}
where $A^{(\ell)}$ denotes the Block~5, Head~0 attention field and $T$ is
horizontal reflection. As shown in Table~\ref{tab:synthetic_probe_scores},
stripe and gradient patterns achieve only moderate consistency, whereas
Gaussian probes achieve perfect consistency. Together with
Fig.~\ref{fig:synthetic-probes}, these results suggest that the selected ViT
attention field responds more consistently to isotropic patterns than to
directional ones under horizontal reflection, revealing measurable anisotropy.

\paragraph{Real-image attention perturbation.}
We next horizontally flip the Block~5 patch-token representation of a
pretrained ViT-B/16 and propagate the modified state through the remaining
encoder blocks. Figure~\ref{fig:flip-vis} shows that the resulting attention
remains focused on the same semantic object parts, but shifts to the mirrored
location. Thus, for this selected attention field, the intervention alters
spatial routing without disrupting semantic focus. In all qualitative
visualizations, we report Head~0 of the selected transformer block; the
rationale is provided in Appendix~\ref{app:head0_appendix}.

\subsubsection{Flipped-Head Attention in DiT: QKV Consistency vs.\ Mismatch}
\label{sec:dit_main}

To connect the ViT analysis to Diffusion Transformers (DiTs), we perform a
controlled synthetic experiment that isolates the geometric effect of
hidden-state flipping inside attention.
This is a controlled mechanistic analysis rather than a full DiT benchmark; it
illustrates how geometric mismatch can become problematic when attention is
applied repeatedly along the denoising trajectory.

We compare the two attention interventions defined earlier: the
\emph{attention-inconsistent} output-only flip of
Eq.~\ref{eq:attn_inconsistent_main}, and the
\emph{attention-consistent} reference corresponding to
Eq.~\ref{eq:attn_consistent_main}, implemented via coherent transformation of
the interacting attention pathways. We consider both a symmetric and a
non-symmetric synthetic baseline: the former tests geometric stability, while
the latter tests geometric correctness under reflection.
Figure~\ref{fig:two_panel_figure} shows the same qualitative conclusion in both
settings. Under the attention-consistent transformation, the attention field
either preserves the baseline geometry (symmetric case) or yields the correct
mirrored response (non-symmetric case). By contrast, the output-only flip
produces shifted and distorted responses, because routing is computed in one
coordinate frame while the resulting head output is interpreted in another. The
distortion heatmap $|A_{\mathrm{incon}}-A_{\mathrm{con}}|$ makes this
deviation explicit.
These results illustrate the geometric-consistency principle: inside attention,
a hidden-state transformation is stable only when all interacting pathways
remain aligned in a common spatial frame. A minimal pseudocode implementation
is provided in Appendix~\ref{app:dit_pseudocode_appendix}.

\begin{figure}[t]
    \centering
    \begin{subfigure}[t]{0.48\linewidth}
        \centering
        \includegraphics[width=\linewidth]{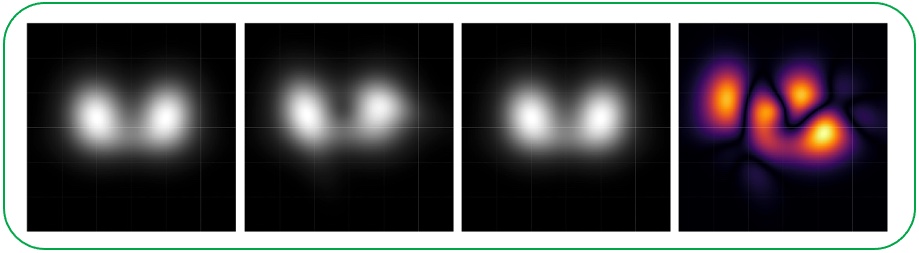}
        \caption{\textbf{Symmetric baseline.} A geometrically valid transformation should remain close to the reference pattern.}
        \label{fig:image1}
    \end{subfigure}
    \hfill
    \begin{subfigure}[t]{0.48\linewidth}
        \centering
        \includegraphics[width=\linewidth]{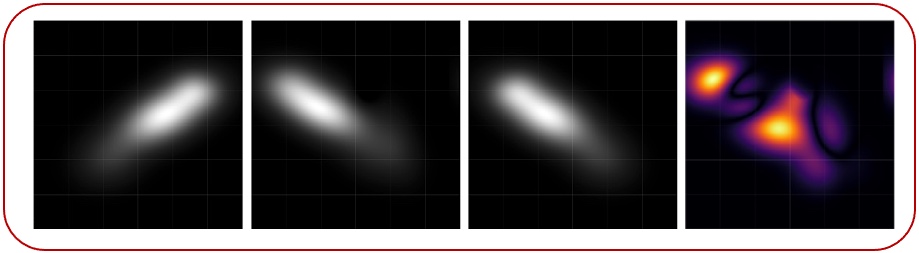}
        \caption{\textbf{Non-symmetric baseline.} A geometrically valid transformation should produce the correct mirrored response.}
        \label{fig:image2}
    \end{subfigure}
    \caption{
\textbf{QKV consistency in flipped-head attention.}
Symmetric and non-symmetric baselines isolate the geometric effect of
hidden-state flipping. In each group, the panels show the baseline,
the attention-inconsistent output-only flip, the attention-consistent QKV
transformation, and the distortion heatmap
$|A_{\mathrm{incon}}-A_{\mathrm{con}}|$. The consistent transformation
preserves the expected geometry, whereas the output-only flip introduces
visible distortion.
}
    \label{fig:two_panel_figure}
\end{figure}

\section{Conclusion}
\label{sec:conclusion_broader_impact}

We introduced a unified framework for \emph{dihedral hidden-state interventions}
and showed that \emph{geometric consistency} determines whether such
interventions stabilize or disrupt computation. Our main contributions are a
common intervention formulation across convolutional and transformer-based
architectures, a theoretical analysis of consistency in attention and U-Net
fusion, and a geometry-aware evaluation suite based on SCS, AMS, Drift, and
NP-MSE. Empirically, in our controlled Stable Diffusion 2.1 study,
appropriately weighted consistent interventions improve hidden-state stability
while preserving image-level quality. Supporting ViT and controlled DiT analyses
show the same qualitative consistency principle. More broadly, these results
suggest that geometric consistency is a useful design principle for
symmetry-aware hidden-state adaptation, while leaving larger-scale studies and
broader transformation sets to future work. Improved internal consistency,
however, does not by itself address broader concerns such as bias, fairness, or
misuse.

\section*{Acknowledgments}
This work was supported in part by funding from the Canada Excellence Research Chairs (CERC) program. The authors gratefully acknowledge this support. The authors also acknowledge Compute Canada and Calcul Québec for providing the computing resources used in this work.

\bibliography{main}
\bibliographystyle{collas2026_conference}

\appendix
\appendix

\section*{Appendix}
This appendix provides supplementary derivations, proofs, implementation
details, and extended discussions supporting the main paper. We group the
material into:
(i) formal derivations for flipped-head interventions,
(ii) extended consistency analysis for transformer denoisers,
(iii) supplementary details for the Stable Diffusion UNet framework and
experiments, and
(iv) detailed definitions of the evaluation metrics.

\section{Detailed Derivation of Flipped-Head Attention in ViT}
\label{app:vit_flip_derivation}

This appendix gives the index-level form of the flipped-head intervention used
in the main text.

Let $h^\star$ be the selected attention head, and let
\[
O_{h^\star}\in\mathbb{R}^{N\times d_v},\qquad N=P^2.
\]
Define the token-to-grid mapping
\[
t(i,j)=(i-1)P+j,
\]
and reshape the head output into
\[
\mathcal{O}_{h^\star}\in\mathbb{R}^{P\times P\times d_v},
\qquad
\mathcal{O}_{h^\star}[i,j,:]=O_{h^\star}[t(i,j),:].
\]

For a horizontal flip operator $\mathcal{F}$,
\[
(\mathcal{F}\mathcal{O}_{h^\star})[i,j,:]
=
\mathcal{O}_{h^\star}[i,P-j+1,:].
\]
Flattening back yields
\[
\widetilde{O}_{h^\star}[t(i,j),:]
=
(\mathcal{F}\mathcal{O}_{h^\star})[i,j,:].
\]

Replacing only head $h^\star$ gives
\[
\widehat{O}
=
\mathrm{Concat}(O_1,\dots,O_{h^\star-1},\widetilde{O}_{h^\star},
O_{h^\star+1},\dots,O_H),
\]
and the resulting block output is
\[
Y=\widehat{O}W^O.
\]
Equivalently, for token $(i,j)$,
\[
\mathbf{y}_{t(i,j)}
=
W^O
\begin{bmatrix}
O_1[t(i,j),:] \\
\vdots\\
O_{h^\star-1}[t(i,j),:]\\
O_{h^\star}[t(i,P-j+1),:]\\
O_{h^\star+1}[t(i,j),:]\\
\vdots\\
O_H[t(i,j),:]
\end{bmatrix}.
\]

Thus, the intervention leaves the attention weights unchanged and modifies only
the post-attention representation of one head, so each token mixes standard
contextual features with a mirrored contextual feature from the transformed
head.

\section{Extended DiT Consistency Analysis}
\label{app:qkv_appendix}

This appendix provides additional justification for the consistency analysis
introduced in Sec.~\ref{sec:method}.

\subsection{Permutation Equivariance of a Single Attention Head}

Let
\[
\mathcal{A}(Q,K,V)
=
\mathrm{softmax}\!\left(\frac{QK^\top}{\sqrt{d_k}}\right)V
\]
and let $\Pi_T$ be the permutation matrix induced by a spatial transformation
$T$ on token indices. Then
\[
\mathcal{A}(\Pi_TQ,\Pi_TK,\Pi_TV)=\Pi_T\,\mathcal{A}(Q,K,V).
\]

\paragraph{Proof.}
Let
\[
Q'=\Pi_TQ,\qquad K'=\Pi_TK,\qquad V'=\Pi_TV.
\]
Then
\[
Q'K'^\top = \Pi_TQK^\top \Pi_T^\top.
\]
Since softmax is row-wise and commutes with simultaneous row/column
permutations,
\[
\mathrm{softmax}(\Pi_TQK^\top \Pi_T^\top)
=
\Pi_T\,\mathrm{softmax}(QK^\top)\,\Pi_T^\top.
\]
Therefore
\[
\mathcal{A}(Q',K',V')
=
\mathrm{softmax}(Q'K'^\top)V'
=
\Pi_T\,\mathrm{softmax}(QK^\top)V
=
\Pi_T\,\mathcal{A}(Q,K,V).
\]

This shows that equivariance holds for an isolated transformed head. The
inconsistency discussed in the main text therefore does not arise from the
single-head attention map itself, but from mixing transformed and untransformed
heads within the same multi-head block.

\subsection{Heuristic Error Propagation Under Repeated Attention Perturbations}

We now give a heuristic linearized argument for why repeated denoising can
amplify the effect of geometric inconsistency in transformer diffusion models.

Let
\[
\widetilde{O}_t = O_t + E_t,
\]
where $E_t$ denotes the perturbation induced by the inconsistent intervention at
step $t$. Consider a schematic denoising update
\[
z_{t-1} = \Psi_t(z_t,\hat{\epsilon}_t),
\]
where $\hat{\epsilon}_t$ is the predicted noise and $\Psi_t$ is the one-step
sampler update. Linearizing with respect to the perturbation gives
\[
\hat{\epsilon}_t^{\mathrm{inc}}
=
\hat{\epsilon}_t + J_tE_t,
\]
where $J_t=\partial\hat{\epsilon}_t/\partial O_t$ is the local Jacobian of the
noise predictor with respect to the attention output. The resulting deviation in
the denoising trajectory is approximately
\[
\delta z_{t-1}\approx D_tJ_tE_t,
\]
where $D_t$ is the local sensitivity of the sampler update to perturbations in
the predicted noise. Unrolling across denoising steps yields the heuristic
accumulation
\[
\Delta_T \approx \sum_{t=1}^{T} D_tJ_tE_t.
\]
Thus, even small per-step geometric inconsistencies can accumulate into a
nontrivial trajectory deviation over many denoising steps.

\subsection{Diffusion Timestep Sensitivity: ViT vs.\ DiT}

For a ViT, attention is applied in a single forward pass. For a DiT, attention
is applied repeatedly across denoising steps. Under the heuristic accumulation
above, the relative sensitivity may scale with the number of denoising steps as
\[
R
=
\frac{\|\Delta^{\mathrm{DiT}}\|}{\|\Delta^{\mathrm{ViT}}\|}
\approx
\frac{\sum_{t=1}^{T}\|D_tJ_tE_t\|}{\|E_1\|}.
\]
This suggests that repeated denoising can amplify the effect of geometric
inconsistency more strongly in DiT than in a single-pass ViT.

\section{Additional Clarification on Diffusion Hidden-State Interventions}
\label{app:diffusion_details}

The hidden-state intervention objective introduced in
Sec.~\ref{sec:method} applies unchanged to both U-Net and DiT diffusion
backbones. For a sampled block $\ell$ and transformation $\tau\in\mathcal{T}$, we
replace the original hidden activation
\[
F^{(\ell)}(x_t,t)
\]
by its transformed version
\[
\widetilde{F}^{(\ell)}(x_t,t)=\tau(F^{(\ell)}(x_t,t)),
\]
and denote by $\varepsilon_\theta^{(\tau,\ell)}(x_t,t)$ the denoiser output of
the modified network. The training objective then penalizes deviation between
the target noise $\varepsilon$ and the prediction
$\varepsilon_\theta^{(\tau,\ell)}(x_t,t)$, as defined in
Eq.~\ref{eq:diff_aug_loss_main}.

What differs across architectures is not the objective itself, but the form of
the coupled computation that must remain geometrically aligned. In U-Nets, the
critical interaction arises at skip-connected encoder--decoder fusion: paired
feature maps
$F_{\mathrm{enc}}^{(\ell)}$ and $F_{\mathrm{dec}}^{(\ell')}$ at matched spatial
resolution are combined through a fusion operator $\phi$, so geometric
consistency requires that both branches be expressed in the same transformed
frame before fusion. In DiT-style denoisers, the corresponding interaction
arises through attention and token mixing, as analyzed in
Appendix~\ref{app:qkv_appendix}; there, consistency requires the interacting
attention pathways to share a common transformed frame.

In our implementation, one intervention location $\ell$ and one transformation
$\tau$ are sampled per mini-batch.

\section{Proofs for Sec.~\ref{sec:theory}}
\label{app:theory_proofs}

\subsection{Proof of Lemma~\ref{lem:flip_correspondence_main}}
\label{app:proof_flip_correspondence}

By construction, the reshaped head output satisfies
\[
\mathcal{O}_{h^\star}[i,j,:]=O_{h^\star}[t(i,j),:].
\]
A horizontal flip acts as
\[
(T\mathcal{O}_{h^\star})[i,j,:]
=
\mathcal{O}_{h^\star}[i,P-j+1,:].
\]
Flattening back defines
\[
\widetilde{O}_{h^\star}[t(i,j),:]
=
(T\mathcal{O}_{h^\star})[i,j,:].
\]
Substituting the flipped expression gives
\[
\widetilde{O}_{h^\star}[t(i,j),:]
=
\mathcal{O}_{h^\star}[i,P-j+1,:]
=
O_{h^\star}[t(i,P-j+1),:],
\]
which is exactly the claimed correspondence.

\subsection{Proof of Proposition~\ref{prop:final_token_main}}
\label{app:proof_final_token}

Let
\[
\widehat{O}
=
\mathrm{Concat}(O_1,\dots,O_{h^\star-1},\widetilde{O}_{h^\star},
O_{h^\star+1},\dots,O_H)
\]
denote the intervened multi-head representation. Partition the output
projection as
\[
W^O=[W_1^O\;\cdots\;W_H^O],
\qquad
W_h^O\in\mathbb{R}^{d_{\mathrm{model}}\times d_v}.
\]
Then for any token index $t$,
\[
\mathbf{y}_t
=
\sum_{h=1}^{H} W_h^O\,\widehat{o}_h[t],
\]
where $\widehat{o}_h[t]$ is the token feature contributed by head $h$ after the
intervention. For $h\neq h^\star$, we have
\[
\widehat{o}_h[t(i,j)] = o_h[t(i,j)].
\]
For the selected head, Lemma~\ref{lem:flip_correspondence_main} gives
\[
\widehat{o}_{h^\star}[t(i,j)]
=
o_{h^\star}[t(i,P-j+1)].
\]
Substituting these into the block expansion of $\mathbf{y}_{t(i,j)}$ yields
\[
\mathbf{y}_{t(i,j)}
=
\sum_{h\neq h^\star} W_h^O\,o_h[t(i,j)]
+
W_{h^\star}^O\,o_{h^\star}[t(i,P-j+1)],
\]
which proves the claim.

\subsection{Proof of Proposition~\ref{prop:attn_consistency_main}}
\label{app:proof_attn_consistency}

If every head output is transformed by the same permutation $\Pi_T$, then
\[
Y_{\mathrm{cons}}
=
\mathrm{Concat}(\Pi_T O_1,\dots,\Pi_T O_H)W^O.
\]
Because $\Pi_T$ acts on the token axis while $W^O$ acts on the concatenated
feature axis, the permutation factors out:
\[
\mathrm{Concat}(\Pi_T O_1,\dots,\Pi_T O_H)
=
\Pi_T\,\mathrm{Concat}(O_1,\dots,O_H).
\]
Hence
\[
Y_{\mathrm{cons}}
=
\Pi_T\,\mathrm{Concat}(O_1,\dots,O_H)W^O
=
\Pi_T Y.
\]

Now suppose only a strict subset $S\subsetneq\{1,\dots,H\}$ of heads is
transformed. Then the intervened output has the form
\[
Y_{\mathrm{partial}}
=
\mathrm{Concat}(\widetilde{O}_1,\dots,\widetilde{O}_H)W^O,
\qquad
\widetilde{O}_h=
\begin{cases}
\Pi_T O_h, & h\in S,\\
O_h, & h\notin S.
\end{cases}
\]
We claim that, in general, $Y_{\mathrm{partial}}\neq \Pi_T Y$. To see this, it
suffices to construct a counterexample. Choose some $h_0\notin S$ and let all
other head outputs be zero, while $O_{h_0}$ is nonzero and not invariant under
$\Pi_T$. Then
\[
Y_{\mathrm{partial}} = O_{h_0}W_{h_0}^O,
\qquad
\Pi_T Y = \Pi_T(O_{h_0}W_{h_0}^O).
\]
Since $O_{h_0}$ is not invariant under $\Pi_T$, these two quantities differ in
general. Therefore, selective transformation of only part of the multi-head
module does not preserve a common transformed frame.

\subsection{Proof of Proposition~\ref{prop:skip_consistency_main}}
\label{app:proof_skip_consistency}

By assumption, the fusion operator $\phi$ is equivariant under simultaneous
spatial permutation:
\[
\phi(\Pi_T a,\Pi_T b)=\Pi_T\phi(a,b).
\]
Applying this with
\[
a=F_{\mathrm{dec}}^{(\ell')}
\qquad\text{and}\qquad
b=F_{\mathrm{enc}}^{(\ell)}
\]
gives
\[
\phi(\Pi_T F_{\mathrm{dec}}^{(\ell')},\Pi_T F_{\mathrm{enc}}^{(\ell)})
=
\Pi_T\,\phi(F_{\mathrm{dec}}^{(\ell')},F_{\mathrm{enc}}^{(\ell)}),
\]
which proves the consistency relation.

To see why one-sided transformation is inconsistent in general, consider a
standard fusion operator such as addition:
\[
\phi(a,b)=a+b.
\]
Then
\[
\phi(\Pi_T a,b)=\Pi_T a+b,
\qquad
\Pi_T\phi(a,b)=\Pi_T a+\Pi_T b.
\]
These are equal only in special cases, for example when $b=\Pi_T b$. Thus,
applying the transformation to only one branch generally fails to preserve the
common transformed frame. The same conclusion holds for common skip-fusion
operations that depend nontrivially on both inputs, such as concatenation
followed by a shared convolution.
\subsection{Proof of the symmetry-regularization claim}
\label{app:proof_symmetry_regularization}

Since $\mathcal{H}_{\mathrm{sym}}\subseteq\mathcal{H}$, the supremum over
$\mathcal{H}_{\mathrm{sym}}$ is bounded by the supremum over $\mathcal{H}$.
Therefore, for any sample $S=\{x_i\}_{i=1}^n$,
\[
\widehat{\mathcal{R}}_S(\mathcal{H}_{\mathrm{sym}})
=
\mathbb{E}_{\sigma}
\left[
\sup_{h\in\mathcal{H}_{\mathrm{sym}}}
\frac{1}{n}\sum_{i=1}^n \sigma_i h(x_i)
\right]
\le
\mathbb{E}_{\sigma}
\left[
\sup_{h\in\mathcal{H}}
\frac{1}{n}\sum_{i=1}^n \sigma_i h(x_i)
\right]
=
\widehat{\mathcal{R}}_S(\mathcal{H}).
\]

Next, because $\mathcal{H}_{\mathrm{sym}}\subseteq\mathcal{H}$, the induced
loss classes satisfy
\[
\mathcal{L}_{\mathcal{H}_{\mathrm{sym}}}
\subseteq
\mathcal{L}_{\mathcal{H}}.
\]
Applying the same monotonicity argument yields
\[
\widehat{\mathcal{R}}_S(\mathcal{L}_{\mathcal{H}_{\mathrm{sym}}})
\le
\widehat{\mathcal{R}}_S(\mathcal{L}_{\mathcal{H}}).
\]

A standard Rademacher-complexity generalization bound for bounded losses then
implies that applying the same bound to the symmetry-constrained subclass,
equivalently to its induced loss class, yields a no-larger complexity term. This gives the claimed idealized
regularization interpretation.
The argument therefore shows only that the symmetry-constrained class has a
no-larger Rademacher complexity term; it does not measure how tight or large
the reduction is in a particular model class.

\section{Stable Diffusion UNet Framework}
\label{app:sd_framework_appendix}

Our unified framework inserts exactly one sampled geometric transformation from
the retained reflection subset at one selected location during each forward
pass. Depending on the mode, candidate
locations include ResNet blocks, attention blocks, and skip-connected fusion
points.

\subsection{Intervention Modes}

\begin{itemize}
    \item \textbf{Baseline:} no intervention.
    \item \textbf{Random-dihedral-consistent:} the same sampled
    transformation is applied consistently across interacting pathways.
    \item \textbf{Attention-consistent:} the transformation is applied
    uniformly to $Q$, $K$, and $V$ before attention.
    \item \textbf{Attention-inconsistent:} only the value projection or
    attention output is transformed.
    \item \textbf{Inconsistent-skip:} only one side of a skip connection
    is transformed.
\end{itemize}

\subsection{Algorithm}
\begin{algorithm}[t]
\caption{Schematic intervention framework for Stable Diffusion UNet}
\label{alg:sd_unet_framework}
\begin{algorithmic}[1]
\Require mode $M$, candidate sets $C_{\mathrm{res}}$, $C_{\mathrm{attn}}$, $C_{\mathrm{skip}}$, transform set $\mathcal{T}$
\State
\[
C \leftarrow
\begin{cases}
\varnothing, & M = \texttt{baseline},\\
C_{\mathrm{res}} \cup C_{\mathrm{skip}}, & M = \texttt{random-dihedral-consistent},\\
C_{\mathrm{attn}}, & M = \texttt{attention-consistent}.
\end{cases}
\]
\If{$C \neq \varnothing$}
    \State Sample target block $B^\star \in C$
    \State Sample transformation $\tau \sim \mathcal{T}$
\EndIf
\For{each block $B$ in the UNet}
    \If{$M = \texttt{baseline}$ or $B \neq B^\star$}
        \State $x \leftarrow B(x)$
    \Else
        \State $x \leftarrow \textsc{ApplyIntervention}(B,x,M,\tau)$
    \EndIf
\EndFor
\State \Return output latent $x$
\end{algorithmic}
\end{algorithm}

Algorithm~\ref{alg:sd_unet_framework} summarizes the retained Stable Diffusion
UNet training framework used in our experiments.

In Algorithm~\ref{alg:sd_unet_framework},  $\textsc{ApplyIntervention}(B,x,M,\tau)$ denotes the mode-specific
operation at the sampled location. For
random-dihedral-consistent, the sampled transformation is applied
consistently across the interacting pathways at that location. For
attention-consistent, the transformation is applied coherently inside
attention (e.g.\ to the interacting attention pathways). For
attention-inconsistent, only part of the attention computation is
transformed.

This retained framework supports a controlled comparison of the implemented
geometrically consistent intervention modes under a shared training protocol.

\section{Stable Diffusion UNet Experimental Details}
\label{app:sd_experiment_detail_appendix}

We construct a lightweight but diverse experimental dataset by randomly sampling
500 images from the Oxford-IIIT Pet dataset~\cite{parkhi2012cats}. All images are resized to
$512\times512$ and encoded by the Stable Diffusion 2.1 VAE into latent tensors
of shape $4\times64\times64$.

\subsection{Training Setup}

Each experiment is run for 5,000 training steps. We use:
\begin{itemize}
    \item batch size $4$,
    \item AdamW optimizer with learning rate $10^{-5}$,
    \item a standard DDPM training scheduler,
    \item evaluation every 50 steps,
    \item final evaluation using full-UNet passes without intervention.
\end{itemize}

We keep all optimization and evaluation hyperparameters fixed across modes so
that differences can be attributed to the intervention type rather than
mode-specific tuning. The batch size and learning rate were chosen as a stable
lightweight fine-tuning configuration for Stable Diffusion 2.1 under our GPU
memory constraints. For the 50\% variants, preliminary runs showed that applying
the intervention too rarely stayed close to the baseline, whereas applying it to
every minibatch could be overly strong; we therefore use a randomly selected
50\% of minibatches as a fixed milder schedule.

\subsection{Reflection Transform Set}
For each forward pass, once a target layer is selected, we apply a randomly chosen reflection transformation from
\[
\mathcal{T}
=
\{T_{\mathrm{hor}}, T_{\mathrm{ver}}, T_{\mathrm{diag}}, T_{\mathrm{anti}}\}.
\]
We exclude rotations because VAE latents in Stable Diffusion are not reliably
rotation-consistent.

\subsection{Random Selection of Augmentation Location}

Depending on the mode, augmentation is applied at exactly one randomly selected
location during each forward pass:
\begin{itemize}
    \item \textbf{ResNet modes:} candidates from down, mid, and up blocks.
    \item \textbf{Attention modes:} candidates include attention blocks.
    \item \textbf{Baseline mode:} no augmentation.
\end{itemize}

Sampling follows
\[
B^\star \sim \mathrm{Uniform}(C), \qquad \tau \sim \mathcal{T}.
\]

\subsection{Why Random Layer Selection Matters}

Stable Diffusion is hierarchical and spatially structured. Down blocks capture
global geometry, mid blocks capture coarse semantics, and up blocks inject fine
detail. Random perturbation location therefore:
\begin{itemize}
    \item tests equivariance across multiple scales,
    \item avoids overfitting to a specific layer,
    \item exposes the network to geometric perturbations throughout its depth.
\end{itemize}


\section{Metric Definitions and Implementation Details}
\label{app:metric_defs_appendix}

This appendix specifies how the evaluation metrics in
Sec.~\ref{sec:exp_metrics} are computed in practice.

\paragraph{Noise-Prediction Mean Squared Error (NP-MSE).}
We use
\[
\mbox{$\mathrm{NP\text{-}MSE}$}
=
\mathbb{E}_{x_t,t}\big[\|\hat{\varepsilon}_\theta(x_t,t)-\varepsilon\|_2^2\big]
\]
as a lightweight fidelity proxy. This is simply the standard diffusion
noise-prediction error averaged over evaluation samples and timesteps. We use
\emph{Noise-Prediction Mean Squared Error (NP-MSE)} as the direct name for this quantity;
it should not be interpreted as an estimator of image-space Fréchet Inception
Distance.

\paragraph{Feature alignment.}
For geometry-sensitive metrics, the clean activation is denoted by $A$ and the
intervened activation by $\widetilde{A}$. When the intervention applies a
transformation $T$, we first align the intervened activation back to the clean
frame via
\[
\widetilde{A}^{\mathrm{align}}=T^{-1}(\widetilde{A}).
\]
This ensures that SCS and Drift measure instability beyond the intended
geometric transformation itself.

\paragraph{Self-Consistency Shift (SCS).}
After alignment, SCS is computed as
\[
\mathrm{SCS}
=
\frac{1}{BCHW}\|A-\widetilde{A}^{\mathrm{align}}\|_1.
\]
This metric captures feature-level mismatch between clean and intervened
representations after both are expressed in the same spatial frame.

\paragraph{Activation Mass Scatter (AMS).}
For each sample $b$, we define the spatial activation mass
\[
m_b(h,w)=\sum_{c=1}^{C}|A_{b,c,h,w}|,
\qquad
\bar m_b(h,w)=\frac{m_b(h,w)}{\sum_{h,w}m_b(h,w)}.
\]
Its center of mass is
\[
\mu_b
=
\sum_{h,w}\bar m_b(h,w)\,(h,w).
\]
AMS is then defined as the second spatial moment of the normalized mass:
\[
\mathrm{AMS}
=
\frac{1}{B}\sum_{b=1}^{B}\sum_{h,w}\bar m_b(h,w)\,
\bigl\|(h,w)-\mu_b\bigr\|_2^2.
\]
Thus AMS measures how spatially dispersed the activation mass is around its own
center.

\paragraph{Drift.}
Let $\mu_b$ be the center of mass of the clean activation map and let
$\widetilde{\mu}_b^{\mathrm{align}}$ be the center of mass of the aligned
intervened activation map. Drift is defined by
\[
\mathrm{Drift}
=
\frac{1}{B}\sum_{b=1}^{B}
\bigl\|\mu_b-\widetilde{\mu}_b^{\mathrm{align}}\bigr\|_2^2.
\]
Unlike SCS, which measures feature-level difference, Drift captures whether the
global spatial focus of the representation moves under intervention.

All hidden-state metrics are computed at the evaluated feature block and then
averaged across the corresponding evaluation samples and timesteps.

\section{Random qualitative samples}
\label{app:qualitative_samples}

\begin{figure*}[ht]
    \centering
    \includegraphics[width=\linewidth]{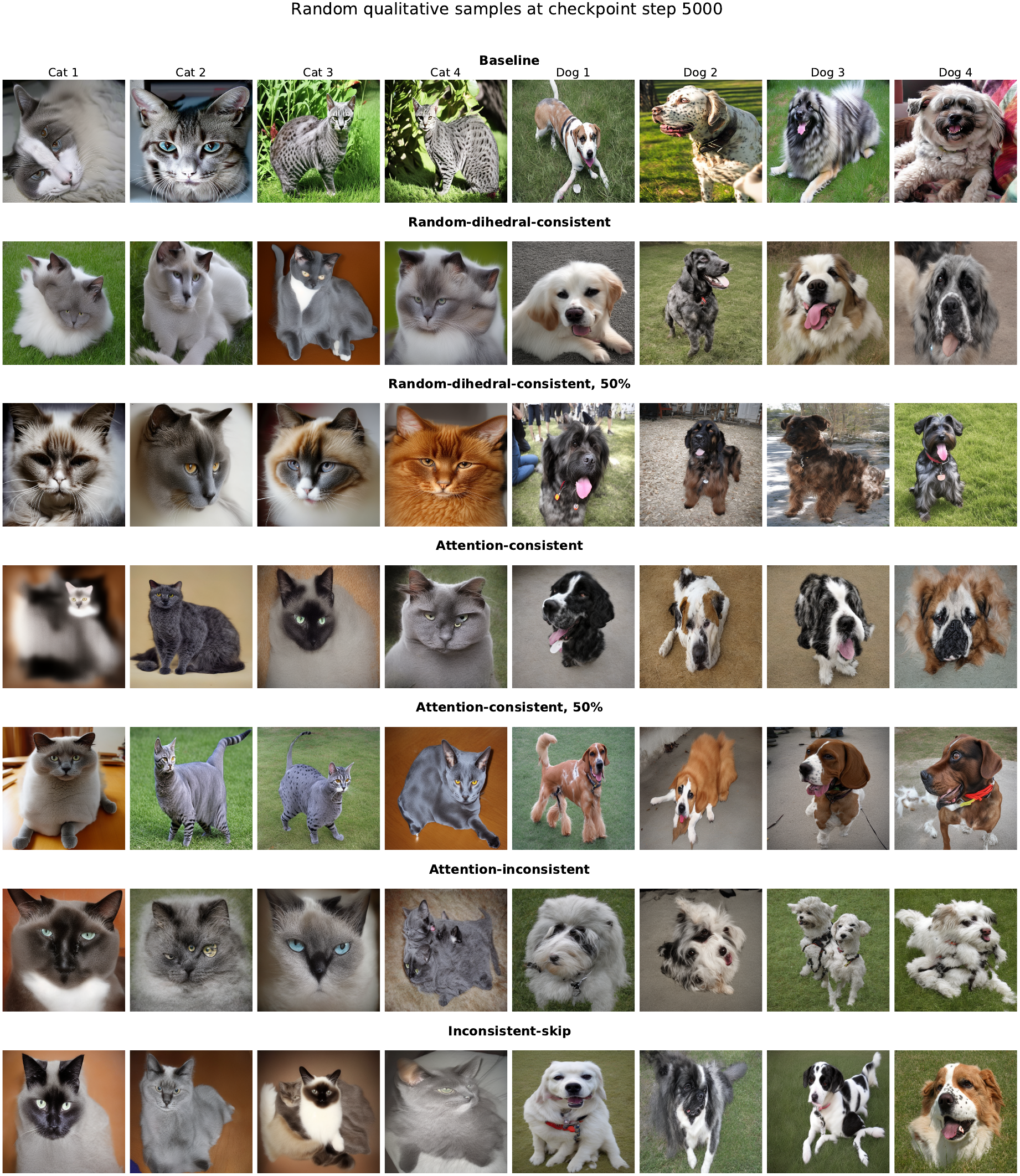}
    \caption{
Random qualitative samples from the final checkpoint ($5{,}000$ steps). The
examples were not manually curated; instead, they were sampled uniformly at
random from the full set of 105{,}000 generated images. For each mode, we
display four cat images and four dog images. Each row corresponds to one mode.
}
    \label{fig:qualitative-random-samples}
\end{figure*}

\section{Proof-of-Concept FID Comparison Details}
\label{app:proof_of_concept_details}

This appendix provides additional dataset, reference, and training details for
the proof-of-concept FID comparisons reported in
Sec.~\ref{sec:baseline_cifar_celeba_mnist}.

\subsection{Dataset Choice}
\label{app:proof_of_concept_datasets}

Because these experiments are intended as small-scale supporting comparisons
under limited computational resources, we restrict attention to compact and
well-established datasets. CIFAR-10 and CelebA-64 serve as the main U-Net
diffusion benchmarks, while MNIST provides a lightweight auxiliary
transformer-based diffusion setting.

\subsection{CIFAR-10 and CelebA-64 DDPM References}
\label{app:proof_of_concept_ddpm_refs}

For CIFAR-10 and CelebA-64, we use the reproduced vanilla DDPM baselines
reported by Proszewska \emph{et al.}~\cite{proszewska2025dmz}. For CIFAR-10,
their setup uses $32\times32$ images, a cosine noise schedule, batch size 128,
learning rate $10^{-4}$, and 250K training iterations, reporting FID 4.46 at
100 denoising steps. For CelebA-64, the same source reports a reproduced DDPM
baseline on $64\times64$ images with batch size 256, learning rate $10^{-4}$,
and 300K training iterations, yielding FID 4.51 at 100 denoising steps.

We use these reproduced values as our reference baselines because they provide
a consistent protocol across both CIFAR-10 and CelebA-64.

\subsection{Auxiliary Transformer-Based MNIST Reference}
\label{app:proof_of_concept_mnist_ref}

To complement the U-Net baselines, we include an auxiliary transformer-based
diffusion reference on MNIST using SDiT~\cite{yang2024sdit}. The paper reports
FID 5.54 on MNIST at $28\times28$ resolution, evaluated using 50{,}000 real and
50{,}000 generated images. We treat this result as an auxiliary
transformer-based reference rather than a directly matched DDPM baseline.

\subsection{Regularization Schedule}
\label{app:proof_of_concept_reg_schedule}

For all regularized variants, we keep the corresponding baseline architecture,
diffusion objective, optimizer, sampling procedure, and evaluation protocol
fixed, and add only our regularization term. In preliminary tuning, we found
that applying the regularizer too rarely left training too close to the
baseline, whereas applying it too frequently degraded performance. We therefore
apply the regularization to a randomly selected 50\% of mini-batches during
training. This schedule provided the most reliable trade-off between preserving
baseline dynamics and making the effect of the regularizer measurable in these
small-scale experiments. Accordingly, these results are intended as supporting evidence under limited
compute, rather than as definitive benchmark comparisons.


\section{Additional ViT and DiT Study Details}
\label{app:vit_dit_details_appendix}

This appendix provides brief setup details for the auxiliary ViT and DiT
studies reported in Sec.~\ref{sec:additional_studies}.

\subsection{ViT Fine-Tuning Setup}
\label{app:vit_setup_appendix}

For the ViT generalization study, we use a pretrained ViT-B/16
\cite{dosovitskiy2020vit} and fine-tune it on CIFAR-100
\cite{krizhevsky2009learning} for five epochs under three settings:
(i) standard input augmentation at $224\times224$,
(ii) standard input augmentation at $384\times384$, and
(iii) input-or-hidden augmentation, in which reflection-based transformations are
applied either to the input or to intermediate hidden states.

\subsection{Synthetic Probe Construction}
\label{app:synthetic_probe_appendix}

For the synthetic probe analysis, we use horizontal stripes, vertical stripes,
diagonal gradients, and Gaussian-like blobs on $4\times4$ and $8\times8$
grids. These probes provide controlled geometric patterns with different
symmetry properties, allowing us to test how consistently pretrained ViT
attention responds under horizontal reflection.

The probe consistency score is computed as
\[
S^{(\ell)}
=
\frac{1}{M}\sum_{i=1}^{M}
\mathrm{cosine}\!\left(A^{(\ell)}(x_i),A^{(\ell)}(T(x_i))\right),
\]
where $A^{(\ell)}$ denotes the selected attention field and $T$ is horizontal
reflection.

\subsection{DiT Synthetic Attention Comparison}
\label{app:dit_synthetic_appendix}

For the DiT analysis, we compare two interventions inside attention:
the output-only flip, which introduces QKV mismatch, and the QKV-consistent
transformation, which preserves a common spatial frame. We evaluate both on a
symmetric and a non-symmetric synthetic baseline to distinguish geometric
stability from geometric correctness under reflection. A minimal pseudocode implementation is given in
Appendix~\ref{app:dit_pseudocode_appendix}.

\section{Pseudo-code for the Flipped-Head Attention Intervention}
\label{app:dit_pseudocode_appendix}

Algorithm~\ref{alg:dit_flip} gives a minimal pseudocode view of the
flipped-head intervention in attention.

\begin{algorithm}[h]
\caption{Flipped-head intervention in multi-head attention}
\label{alg:dit_flip}
\begin{algorithmic}[1]
\Require query, key, value tensors $Q,K,V$; selected head $h^\star$; spatial transformation $T$
\For{each head $h$}
    \State $O_h \leftarrow \mathrm{softmax}(Q_hK_h^\top/\sqrt{d_k})V_h$
\EndFor
\State Reshape $O_{h^\star}$ to a spatial grid: $\mathcal{O}_{h^\star}\in\mathbb{R}^{P\times P\times d_v}$
\State Apply the transformation: $\widetilde{\mathcal{O}}_{h^\star}\leftarrow T(\mathcal{O}_{h^\star})$
\State Flatten back to token form: $\widetilde{O}_{h^\star}\in\mathbb{R}^{N\times d_v}$
\State Replace the selected head output: $O_{h^\star}\leftarrow \widetilde{O}_{h^\star}$
\State Concatenate all heads and project:
\[
Y\leftarrow \mathrm{Concat}(O_1,\dots,O_H)W^O
\]
\State \Return $Y$
\end{algorithmic}
\end{algorithm}

This pseudocode shows the output-only intervention used for the
attention-inconsistent case. In the attention-consistent variant, the
transformation is applied coherently to the interacting pathways before
attention is computed.

\subsection{Generalization to Other Heads}

Although Head~0 is used as the primary visualization head in this work, the
method is not restricted to it. The intervention can be applied to any head or
block, and its effects propagate through deeper layers regardless of which head
is selected for visualization.


\section{Rationale for Selecting Head 0}
\label{app:head0_appendix}

In all qualitative ViT experiments and visualizations, we report attention maps
using Head~0 of the selected transformer block (e.g., Block~5). This choice is
based on our preliminary qualitative analysis.

\subsection{Interpretability of Head~0}

In our preliminary qualitative inspection, Head~0 exhibited comparatively
compact and interpretable spatial patterns, making it a convenient choice for
visualization:
\begin{itemize}
    \item strong and compact activation on foreground objects,
    \item low background entropy,
    \item stable spatial localization across inputs,
    \item clear correspondence to human-interpretable regions.
\end{itemize}
This makes Head~0 particularly suitable for qualitative evaluation.


\end{document}